%% file: main.tex
\documentclass{article} 
\usepackage{gram2026,times}

\usepackage{hyperref}
\usepackage{url}

\title{Generalized Reduction to the Isotropy for Flexible Equivariant Neural Fields}

\author{Alejandro García-Castellanos$^{1}$\thanks{Corresponding author: $<$\texttt{a.garciacastellanos@uva.nl}$>$} \qquad  \qquad Gijs Bellaard$^{2}$ \\ \textbf{Remco Duits}$^{2}$ \qquad  \textbf{Daniël M. Pelt}$^{3}$ \qquad \textbf{Erik J. Bekkers}$^{1}$\\
  $^1$Amsterdam Machine Learning Lab (AMLab), University of Amsterdam \quad \\
  $^2$Department of Mathematics and Computer Science, Eindhoven University of Technology\\
  $^3$Leiden Institute of Advanced Computer Science, Universiteit Leiden
}

\iclrfinalcopy 

\usepackage[utf8]{inputenc} 
\usepackage[T1]{fontenc}    
\usepackage{hyperref}       
\usepackage{url}            
\usepackage{booktabs}       
\usepackage{amsfonts}       
\usepackage{amsmath}
\usepackage{amssymb}
\usepackage{float}
\usepackage{amsthm}
\usepackage{caption}
\usepackage{subcaption}
\usepackage{bbm}
\usepackage{diagbox}
\usepackage{nicefrac}       
\usepackage{microtype}      
\usepackage{graphicx}

\usepackage{svg}
\usepackage{quiver}

\usepackage{wrapfig}

\usepackage{mathrsfs}
\makeatletter
\newcommand*\rel@kern[1]{\kern#1\dimexpr\macc@kerna}
\newcommand*\widebar[1]{%
  \begingroup
  \def\mathaccent##1##2{%
    \rel@kern{0.8}%
    \overline{\rel@kern{-0.8}\macc@nucleus\rel@kern{0.2}}%
    \rel@kern{-0.2}%
  }%
  \macc@depth\@ne
  \let\math@bgroup\@empty \let\math@egroup\macc@set@skewchar
  \mathsurround\z@ \frozen@everymath{\mathgroup\macc@group\relax}%
  \macc@set@skewchar\relax
  \let\mathaccentV\macc@nested@a
  \macc@nested@a\relax111{#1}%
  \endgroup
}
\makeatother

\usepackage{enumitem}
\usepackage{multirow}

\usepackage{algpseudocode} 

\newtheorem{theorem}{Theorem}[section]

\newtheorem{lemma}{Lemma}[section]

\newtheorem{corollary}{Corollary}[section]

\newtheorem{proposition}{Proposition}[section]

\newtheorem{definition}{Definition}[section]
\newtheorem{remark}{Remark}[section]
\newtheorem{example}{Example}[section]

\newcommand{\abs}[1]{\left\lvert #1 \right\rvert}
\newcommand{\norm}[1]{\left\lVert #1 \right\rVert}

\newcommand{\RR}{\mathbb{R}}
\newcommand{\CC}{\mathbb{C}}

\newcommand{\moveup}[1]{\rule{0pt}{#1pt}}

\usepackage{tcolorbox} 

\usepackage[ruled,vlined]{algorithm2e}

\begin{document}

\maketitle

\input{sections/abstract}

\input{sections/introduction}

\input{sections/methods}

\input{sections/experiments}

\input{sections/conclusion}

\input{sections/acknowledgements}
\newpage
{
\bibliographystyle{iclr2026_conference}
\bibliography{bib}
}

\newpage
\appendix

\input{sections/appendix}

\end{document}

%% file: sections/abstract.tex
\begin{abstract}
Many geometric learning problems require invariants on heterogeneous product spaces, i.e., products of distinct spaces carrying different group actions, where standard techniques do not directly apply. We show that, when a group $G$ acts transitively on a space $M$, any $G$-invariant function on a product space $X \times M$ can be reduced to an invariant of the isotropy subgroup $H$ of $M$ acting on $X$ alone. Our approach establishes an explicit orbit equivalence $(X \times M)/G \cong X/H$, yielding a principled reduction that preserves expressivity. We apply this characterization to Equivariant Neural Fields, extending them to arbitrary group actions and homogeneous conditioning spaces, and thereby removing the major structural constraints imposed by existing methods.
\end{abstract}

%% file: sections/introduction.tex
\section{Introduction}
\label{sec:intro}

Symmetry is a powerful structural prior in geometric machine learning. When data admit a known group of transformations, building invariance or equivariance directly into model architectures improves generalization and data efficiency \citep{bronstein2021geometric, vadgama2025probing}. A fundamental object in this context is the \emph{joint invariant}: a function unchanged under the simultaneous action of a group on multiple inputs.

Formally, let $G$ be a group acting on spaces $X_1, \dots, X_m$. The diagonal action on the product $X_1 \times \cdots \times X_m$ is given by $g \cdot (x_1, \dots, x_m) := (g \cdot x_1, \dots, g \cdot x_m)$, and a joint invariant is any function $f_G : X_1 \times \cdots \times X_m \to \mathbb{R}$ satisfying $f_G(g \cdot x_1, \dots, g \cdot x_m) = f_G(x_1, \dots, x_m)$ for all $g \in G$. Such invariants serve both as building blocks for invariant models and as key ingredients in equivariant map constructions \citep{villar_scalars_2021, kaba_equivariance_2023, bekkers_fast_2023}.


Most existing work focuses on products $X^{\times m}$, where all factors are copies of a single space $X$. This setting arises naturally in point cloud processing, where inputs consist of $m$ points in $\mathbb{R}^d$ \citep{blum-smith_learning_2024, dym_low_2023, villar_scalars_2021}, and complete characterizations of such joint invariants are known for many $(G, X)$ pairs. However, many learning problems involve \emph{heterogeneous product spaces}, i.e., products of distinct spaces carrying different group actions. Here, invariant constructions are less straightforward and typically require ad hoc, problem-specific design~\citep{knigge_equivariant_2024}. Additional background appears in the Appendix~\ref{app:relwork}.


A motivating example arises in \emph{Equivariant Neural Fields (ENFs)} \citep{wessels_grounding_2024}, which represent signal families via coordinate-based networks $f_\theta : X \times \mathcal{Z} \to \mathbb{R}^d$, where $X$ denotes spatial coordinates and $\mathcal{Z}$ is a latent conditioning space. Equivariance is imposed through the constraint $f_\theta(g\cdot x,g\cdot z) = f_\theta(x,  z)$, reducing the problem to constructing joint invariants on the heterogeneous product $X \times \mathcal{Z}$. Current ENF architectures handle only specific groups and spaces, leaving the general case open \citep{garcia-castellanos_equivariant_2025}.

In this work, we develop a systematic framework for joint invariants on heterogeneous products. Our main tool is the \emph{Generalized Reduction to the Isotropy}, which exploits the structure present when one factor is a homogeneous space, i.e., when the group acts transitively on it. Under this assumption, invariants on the full product reduce to invariants of a smaller isotropy subgroup acting on the remaining factors, yielding a principled approach to invariant design.


We demonstrate the utility of this framework on Equivariant Neural Fields, where it yields maximally expressive invariants and enables more flexible architectures. Concretely, our results extend the ENF's framework of \citet{garcia-castellanos_equivariant_2025}---which is currently the most general approach for ENFs with provable expressiveness guarantees---to arbitrary group actions on $X \times \mathcal{Z}$, for arbitrary input spaces $X$ and arbitrary homogeneous conditioning spaces $\mathcal{Z}$. More broadly, our reduction to the isotropy framework is applicable beyond ENFs, and we discuss further applications in machine learning in Appendix~\ref{app:discussion}.

%% file: sections/methods.tex
\section{Main Result}
\label{sec:meth}

In this section, we present the main theoretical contribution of this work: a general reduction principle that showcases how $G$-invariants on product spaces can be simplified without loss of information. 

An introduction to the necessary mathematical foundations of this work can be found in Appendix~\ref{app:mathprel}.

\textbf{Problem Formulation.}\hspace{5pt} Let $G$ be a group acting transitively on a space $M$ and (not necessarily transitively) on a space $X$. We consider the diagonal action of $G$ on the product space $X \times M$ and study its invariant functions.

Our objective is to characterize all functions $f_G: X \times M \to Y$ that are invariant under this action and to construct sets of invariants that separate orbits, thereby providing a complete description of the orbit space $(X \times M)/G$ \citep{dym_low_2023, olver_equivalence_1995}.

\textbf{Approach Overview.}\hspace{5pt} In this work, we show how this problem can be reformulated by exploiting the transitivity of the $G$-action on $M$. Fixing a reference point $p_0 \in M$ determines an isotropy subgroup $H := \mathrm{Stab}_G(p_0)$ and yields a canonical normalization of the $M$–component of each $G$-orbit in $X \times M$. Consequently, the orbit structure of the diagonal $G$-action is entirely determined by the induced action of $H$ on $X$.

The remainder of this section formalizes this observation by establishing an explicit equivalence between the orbit spaces $(X \times M)/G$ and $X/H$, and by deriving a corresponding reduction principle for invariant functions, which we refer to as the \emph{Generalized Reduction to the Isotropy}.

\subsection{Orbit Equivalence}
\label{sec:theory}

We begin by fixing the notation that will be used throughout this subsection. Let $G$ act on a set $M$. For $p,q \in M$, define $G_{p,q} := \{ g \in G \mid g \cdot p = q \}$. In particular, $G_{p,p} = \mathrm{Stab}_G(p)$ is the stabilizer of $p$. For any subgroup $H \subseteq G$ and $x \in X$, we write $H \cdot x := \{ h \cdot x \mid h \in H \}$ for the corresponding orbit. 

The following lemma establishes a fundamental bijection between orbit spaces that underlies our main results.

\begin{lemma}[Orbit Equivalence]
\label{lem:orbeq}
Let $G$ act transitively on the set $M$ and (not necessarily transitively) on the set $X$. Fix $p_0 \in M$ and let $H := \mathrm{Stab}_G(p_0)$. Then the map $\Phi : (X \times M)/G \to X/H$, defined by
\begin{equation}
\label{eq:fwd_def}
\Phi\bigl(G \cdot (x,p)\bigr) = G_{p,p_0} \cdot x \in X/H,
\end{equation}

is a bijection between the orbit spaces $(X \times M)/G$ and $X/H$.
\end{lemma}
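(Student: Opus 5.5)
The plan is to check four things in turn: that $G_{p,p_0}\cdot x$ is a single $H$-orbit, that $\Phi$ does not depend on the orbit representative, that it is injective, and that it is surjective. Throughout we use the coset structure of $G_{p,p_0}$: it is nonempty by transitivity, and if $g_p$ is any element with $g_p\cdot p = p_0$, then
\[
G_{p,p_0} = H g_p .
\]
Indeed, $g\cdot p = p_0$ if and only if $g g_p^{-1}\in H$. Hence
\[
G_{p,p_0}\cdot x = H g_p\cdot x = H\cdot(g_p\cdot x),
\]
which is one $H$-orbit in $X$. So the right-hand side of \eqref{eq:fwd_def} is genuinely an element of $X/H$, and for a fixed pair $(x,p)$ it does not depend on the choice of $g_p$.

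\textbf{Well-definedness.} Let $(x',p') = (k\cdot x, k\cdot p)$ for some $k\in G$. We claim
\[
G_{k\cdot p,p_0} = G_{p,p_0}\,k^{-1}.
\]
This holds because $g\cdot(k\cdot p)=p_0$ if and only if $gk\in G_{p,p_0}$. It follows that
\[
G_{p',p_0}\cdot x' = G_{p,p_0}k^{-1}k\cdot x = G_{p,p_0}\cdot x,
\]
so $\Phi$ depends only on the $G$-orbit of $(x,p)$.

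\textbf{Injectivity.} Suppose $G_{p,p_0}\cdot x = G_{q,p_0}\cdot y$. Choose $g_p\in G_{p,p_0}$ and $g_q\in G_{q,p_0}$. Equality of the two $H$-orbits gives some $h\in H$ with $g_q\cdot y = h g_p\cdot x$. Set $k := g_q^{-1} h g_p$. Then $k\cdot x = y$, and
\[
k\cdot p = g_q^{-1}h\cdot p_0 = g_q^{-1}\cdot p_0 = q .
\]
Thus $k\cdot(x,p) = (y,q)$, so the two $G$-orbits coincide.

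\textbf{Surjectivity.} Given an orbit $H\cdot x\in X/H$, consider the pair $(x,p_0)$. Since $G_{p_0,p_0}=H$, we get $\Phi(G\cdot(x,p_0)) = H\cdot x$.

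The only step that needs care is the coset identity $G_{p,p_0}=Hg_p$ together with its behaviour under the right translation $G_{k\cdot p,p_0}=G_{p,p_0}k^{-1}$. Everything else is direct substitution, and transitivity on $M$ is used only to guarantee that each $G_{p,p_0}$ is nonempty. As a by-product, the inverse map is $H\cdot x \mapsto G\cdot(x,p_0)$.
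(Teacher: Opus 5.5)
Your proof is correct and is essentially the paper's argument. Your coset identity $G_{p,p_0}=Hg_p$ is exactly the paper's \eqref{eq:identification}, and your injectivity and surjectivity checks amount to the paper's verification that the backward map $\Psi(H\cdot x)=G\cdot(x,p_0)$ satisfies $\Psi\circ\Phi=\mathrm{id}$ and $\Phi\circ\Psi=\mathrm{id}$; your phrasing also spares you the paper's separate well-definedness check for $\Psi$, and your identity $G_{k\cdot p,p_0}=G_{p,p_0}k^{-1}$ is a tidier replacement for its two-inclusion argument that $\Phi$ is independent of the representative.
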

\textit{Proof.} See Appendix~\ref{app:orbiteqproof}.






\begin{remark}[Homogeneous space identification]
\label{remk:hom}
If $M$ is a homogeneous space, i.e., it has a transitive $G$-action, then it can be identified with the coset space $G/H$ via the correspondence $p \leftrightarrow gH$, where $g \in G$ satisfies $g \cdot p_0 = p$. Under this identification, $g^{-1} \in G_{p,p_0}$, and the bijection $\Phi$ from Equation~\eqref{eq:fwd_def} takes the form
\[
\Phi\bigl(G \cdot (x, gH)\bigr) = H \cdot (g^{-1} \cdot x).
\]
\end{remark}

\begin{tcolorbox}[title=Intuition of orbit equivalence property]
To build intuition for Lemma~\ref{lem:orbeq}, consider rewriting the orbit $G \cdot (x,p)$ for any $p \in M$ and $x \in X$:
\begin{align}
G \cdot (x,p)
&= \{ g \cdot (x,p) \mid g \in G \} \nonumber \\
&= \bigcup_{g' \in G_{p,p_0}} \{ g \cdot (g' \cdot x, p_0) \mid g \in G \} \nonumber \\
&= \bigcup_{g' \in G_{p,p_0}} G \cdot (g' \cdot x, p_0) \label{eq:aux_res} \\
&= \bigcup_{h \in H} G \cdot (h g' \cdot x, p_0),
\quad \text{for any fixed } g' \in G_{p,p_0}. \label{eq:aux_res_h}
\end{align}

The bijection $\Phi$ associates the orbit $G \cdot (x,p)$ with the collection of second components in~\eqref{eq:aux_res}:
\[
G \cdot (x,p) \;\longleftrightarrow\; G_{p,p_0} \cdot x = \{ g' \cdot x \mid g' \in G_{p,p_0} \}.
\]
Using~\eqref{eq:identification} and~\eqref{eq:aux_res_h}, this is equivalently the $H$-orbit $H \cdot (g' \cdot x)$ for any choice of $g' \in G_{p,p_0}$.
\end{tcolorbox}


\subsection{Generalized Reduction to the Isotropy}
\label{sec:reduction}

Having established the bijection between orbit spaces, we now present our main technique for relating invariants of the $G$-action on $X \times M$ to invariants of the $H$-action on $X$. In practice, computing $H$-invariants on $X$ is often substantially simpler than computing $G$-invariants on $X \times M$.

We first recall a classic result from algebra.

\begin{theorem}[Universal Property of Quotients; see~\cite{dummit_abstract_2003}]
\label{thm:quotUni}
Let $R$ be an equivalence relation on a set $X$, let $f_R : X \to Y$ be a function, and let $\pi : X \to \tilde{X}$ denote the quotient map $x \mapsto [x]$. Then $f_R$ is $R$-invariant if and only if there exists a unique function $\tilde{f}_R : \tilde{X} \to Y$ such that $f = \tilde{f} \circ \pi$.
\end{theorem}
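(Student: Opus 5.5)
The Universal Property of Quotients (Theorem~\ref{thm:quotUni}) is a standard set-theoretic fact, so I will prove it directly from the definitions. The plan is to prove the two implications separately and then establish uniqueness. Throughout, ``$f_R$ is $R$-invariant'' means that $x R x'$ implies $f_R(x) = f_R(x')$. The quotient $\tilde X = X/R$ is the set of equivalence classes, and $\pi$ is the surjection $x \mapsto [x]$. I read $f = \tilde f \circ \pi$ in the statement as $f_R = \tilde f_R \circ \pi$.

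For the direction ($\Leftarrow$), suppose some $\tilde f_R$ satisfies $f_R = \tilde f_R \circ \pi$. If $x R x'$, then $[x] = [x']$, so $f_R(x) = \tilde f_R([x]) = \tilde f_R([x']) = f_R(x')$. Hence $f_R$ is $R$-invariant. This direction needs only existence, not uniqueness.

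For the direction ($\Rightarrow$), the main step is existence. I define $\tilde f_R([x]) := f_R(x)$. The only point requiring care is well-definedness, since a class can have many representatives. If $[x] = [x']$, then $x R x'$, and invariance gives $f_R(x) = f_R(x')$, so the value does not depend on the chosen representative. The factorization $\tilde f_R \circ \pi = f_R$ then holds by construction. For uniqueness, suppose $\tilde g$ also satisfies $\tilde g \circ \pi = f_R$. Every element of $\tilde X$ has the form $[x] = \pi(x)$ because $\pi$ is surjective. So $\tilde g([x]) = f_R(x) = \tilde f_R([x])$ for every class, and therefore $\tilde g = \tilde f_R$.

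The only genuine obstacle is the well-definedness check in the existence step. That check is exactly where invariance is used, and it is immediate from the definition of $\tilde X$ as the set of equivalence classes. In the paper's intended application the relation $R$ is ``same $G$-orbit'' on $X \times M$, so the result will be combined with Lemma~\ref{lem:orbeq} to factor $G$-invariants through $X/H$. The abstract argument above does not depend on that specialization.
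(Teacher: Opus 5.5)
Your proof is correct and is the standard argument. Well-definedness of $\tilde f_R([x]) := f_R(x)$ follows from $R$-invariance, uniqueness follows from surjectivity of $\pi$, and the converse needs only the existence of a factorization. The paper gives no proof of its own and simply cites the Dummit--Foote textbook, where the argument is the same as yours.
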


Let $\rho : M \to G$ be any map satisfying
\begin{equation}
\label{eq:fakemov}
\rho(p) \cdot p = p_0 \quad \text{for all } p \in M,
\end{equation}
i.e., $\rho(p) \in G_{p,p_0}$ for every $p$. We call $\rho$ a \emph{canonicalization map}. Note that $\rho$ need not be a moving frame in the sense of~\cite{olver_joint_2001}, since the action of $G$ on $M$ may fail to be free.

\begin{theorem}[Generalized Reduction to the Isotropy]
\label{thm:redIso}
Let $\rho : M \to G$ satisfy~\eqref{eq:fakemov}, and define $T : X \times M \to X$ by
$T(x,p) = \rho(p) \cdot x$. Then for every $H$-invariant function $f_H : X \to Y$, the composition
\begin{equation}
\label{eq:inv_def}
f_G(x,p) := f_H \circ T(x,p) = f_H\bigl(\rho(p) \cdot x\bigr)
\end{equation}
defines a $G$-invariant function $f_G : X \times M \to Y$. Conversely, every $G$-invariant on $X \times M$ arises uniquely in this manner. Moreover, the resulting invariants are independent of the choice of $\rho$.

In particular, the following diagram commutes:
\begin{equation}
\label{eq:comm_diag}
\begin{tikzcd}
	{X\times M} && X && Y \\
	\\
	{(X\times M)/G} && {X/H}
	\arrow["T", from=1-1, to=1-3]
	\arrow["{{f_G}}", curve={height=-30pt}, dashed, from=1-1, to=1-5]
	\arrow["{{\pi_G}}"', from=1-1, to=3-1]
	\arrow["{{f_H}}", from=1-3, to=1-5]
	\arrow["{{\pi_H}}"', from=1-3, to=3-3]
	\arrow["\Phi", from=3-1, to=3-3]
	\arrow["{{\tilde{f}_H}}"', from=3-3, to=1-5]
\end{tikzcd}
\end{equation}
\end{theorem}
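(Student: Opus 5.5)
The plan is to prove the three claims (invariance, converse with uniqueness, independence of $\rho$) directly from the definitions, and then read off commutativity of diagram~\eqref{eq:comm_diag} from Lemma~\ref{lem:orbeq} and Theorem~\ref{thm:quotUni}. The single algebraic fact driving everything is the following. For any $p\in M$, any $g\in G$, and any $g_1,g_2\in G_{p,p_0}$, we have $g_1g_2^{-1}\in H$. Moreover, $\rho(g\cdot p)\,g\in G_{p,p_0}$, because $\rho(g\cdot p)g\cdot p=p_0$. Consequently $\rho(g\cdot p)\,g=h\,\rho(p)$ for some $h\in H$ depending on $g$ and $p$.

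\textbf{Invariance.} Let $f_H$ be $H$-invariant and set $f_G(x,p)=f_H(\rho(p)\cdot x)$. Then
\[
f_G(g\cdot x,g\cdot p)=f_H\bigl(\rho(g\cdot p)g\cdot x\bigr)=f_H\bigl(h\rho(p)\cdot x\bigr)=f_H\bigl(\rho(p)\cdot x\bigr)=f_G(x,p),
\]
using the fact above and then $H$-invariance of $f_H$.

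\textbf{Independence of $\rho$.} If $\rho'$ is another canonicalization map, then $\rho'(p)\rho(p)^{-1}\in H$. Hence $f_H(\rho'(p)\cdot x)=f_H(\rho(p)\cdot x)$ for all $(x,p)$, so the resulting invariant does not depend on the choice of $\rho$.

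\textbf{Converse.} Let $f_G$ be $G$-invariant and define $f_H(x):=f_G(x,p_0)$.
\begin{itemize}
\item \emph{$f_H$ is $H$-invariant:} for $h\in H$, $f_H(h\cdot x)=f_G(h\cdot x,h\cdot p_0)=f_G(x,p_0)$, since $h\cdot p_0=p_0$.
\item \emph{$f_G$ is recovered:} $f_H(\rho(p)\cdot x)=f_G(\rho(p)\cdot x,\rho(p)\cdot p)=f_G(x,p)$, by $G$-invariance and $\rho(p)\cdot p=p_0$.
\item \emph{Uniqueness:} suppose $f_H\circ T=f'_H\circ T$ with both $f_H,f'_H$ being $H$-invariant. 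Evaluating at $(x,p_0)$ gives $f_H(\rho(p_0)\cdot x)=f'_H(\rho(p_0)\cdot x)$. Since $\rho(p_0)\in H$ and both functions are $H$-invariant, this yields $f_H(x)=f'_H(x)$ for all $x$.
\end{itemize}
So the correspondence $f_H\mapsto f_H\circ T$ is a bijection between $H$-invariants on $X$ and $G$-invariants on $X\times M$.

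\textbf{Commutativity of the diagram.} The top triangle $f_G=f_H\circ T$ holds by definition. The triangle $f_H=\tilde f_H\circ\pi_H$ is Theorem~\ref{thm:quotUni} applied to the $H$-orbit relation. For the square $\pi_H\circ T=\Phi\circ\pi_G$, observe that $\pi_H(T(x,p))=H\cdot(\rho(p)\cdot x)$, while $\Phi(G\cdot(x,p))=G_{p,p_0}\cdot x$. Writing $G_{p,p_0}=H\rho(p)$, these two sets coincide. Consequently $f_G=\tilde f_H\circ\Phi\circ\pi_G$, which is exactly the factorization of $f_G$ through $(X\times M)/G$ guaranteed by Theorem~\ref{thm:quotUni}.

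\textbf{Main obstacle.} The only step requiring care is the coset identity $G_{p,p_0}=H\rho(p)$, which is also what makes $\Phi$ well defined with values in $X/H$. It follows in two directions:
\begin{itemize}
\item if $g\in G_{p,p_0}$, then $g\rho(p)^{-1}$ fixes $p_0$, so $g\in H\rho(p)$;
\item conversely, every element of $H\rho(p)$ sends $p$ to $p_0$.
\end{itemize}
Everything else is routine once this identity is in hand.
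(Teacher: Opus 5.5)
Your proof is correct, but it is organized differently from the paper's. The paper's proof defers to Hayes et al.\ and routes everything through the orbit bijection of Lemma~\ref{lem:orbeq}. By Theorem~\ref{thm:quotUni}, a $G$-invariant factors as $f_G=\tilde f_G\circ\pi_G$ and an $H$-invariant as $f_H=\tilde f_H\circ\pi_H$. The bijection $\Phi$ then makes $\tilde f_G=\tilde f_H\circ\Phi$ a one-to-one correspondence. Finally, \eqref{eq:identification} with $g=\rho(p)$, i.e.\ $\pi_H\circ T=\Phi\circ\pi_G$, converts this into $f_G=f_H\circ T$. Independence of $\rho$ is automatic there, because $\Phi$ is defined through $G_{p,p_0}$ with no reference to $\rho$.

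You instead verify invariance, the converse, uniqueness and $\rho$-independence directly from the single coset identity $G_{p,p_0}=H\rho(p)$. Your explicit inverse $f_G\mapsto f_G(\cdot,p_0)$ is precisely $\tilde f_G\circ\Psi\circ\pi_H$ for the paper's backward map $\Psi$. You invoke $\Phi$ only for the commuting square and never use its bijectivity.

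Your route is more elementary and self-contained, and it makes the inverse concrete as restriction to the slice $X\times\{p_0\}$. The paper's route gives the stronger structural fact that the correspondence comes from a bijection of orbit spaces. The paper later relies on the injectivity of $\Phi$ in two places: to transfer orbit separation (separating $H$-invariants lift to separating $G$-invariants in Algorithm~\ref{alg:separating-invariants}), and to form $\Phi_2\circ\Phi_1^{-1}$ in Corollary~\ref{col:mult_hom}. Your argument would need a separate, though similar, computation for these.
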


\begin{proof}
The proof follows the same steps as Corollary~3.1 and Lemma~3.1 in~\cite{hayes_joint_2022}, replacing Theorem~3.1 therein with Lemma~\ref{lem:orbeq}.
\end{proof}
As noted in Remark~\ref{remk:hom}, when the homogeneous space $M$ is seen as the quotient space $G/H$, a natural choice is $\rho(gH) = g^{-1}$ for any representative $g$ of the $gH$ coset. 

\vspace{5pt}

\begin{remark}
Related reduction results appear in \citet{hayes_joint_2022} and \citet{bekkers_fast_2023}; however, our method strictly generalizes these constructions. Specifically, \citet{bekkers_fast_2023} characterizes invariants on $M \times M$ for a homogeneous space $M$, while \citet{hayes_joint_2022} extends this to $m$-fold products $M^{\times m}$. Our framework unifies both approaches within a single setting applicable to arbitrary heterogeneous products $X \times M$.
\end{remark}

\subsection{Properties of the Reduction}
\label{sec:properties}

We now establish several properties of the reduction to the isotropy that broaden its applicability; proofs are deferred to Appendix~\ref{app:properties}.  For further results, see Appendix~\ref{app:iterated}.

\subsubsection{Flexibility in Choice of Reduction}

Consider a product space $X \times M_1 \times M_2$ where $G$ acts transitively on both $M_1$ and $M_2$. Fix points $p_0 \in M_1$ and $q_0 \in M_2$, and let $H_1 = \mathrm{Stab}_G(p_0)$ and $H_2 = \mathrm{Stab}_G(q_0)$. One may reduce along either homogeneous space: invariants of $H_1$ acting on $X \times M_2$ correspond to invariants of $H_2$ acting on $X \times M_1$. This flexibility allows one to choose the reduction that yields the simplest computation.

\begin{corollary}
\label{col:mult_hom}
Let $\Phi_1 : (X \times M_1 \times M_2)/G \to (X \times M_2)/H_1$ and $\Phi_2 : (X \times M_1 \times M_2)/G \to (X \times M_1)/H_2$ be defined as in~\eqref{eq:fwd_def}, with corresponding maps $T_1$ and $T_2$. Then for any $H_2$-invariant function $f_{H_2} : X \times M_1 \to Y$, the identity
\[
f_{H_1} = \tilde{f}_{H_2} \circ \Phi_2 \circ \Phi_1^{-1} \circ \pi_{H_1}
\]
defines a unique $H_1$-invariant function $f_{H_1} : X \times M_2 \to Y$ satisfying
\[
f_{H_1} \circ T_1 = f_{H_2} \circ T_2.
\]
\end{corollary}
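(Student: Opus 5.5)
The plan is to apply Lemma~\ref{lem:orbeq} and Theorem~\ref{thm:redIso} twice, once along each homogeneous factor, and then use the universal property of quotients (Theorem~\ref{thm:quotUni}) to transfer invariants.

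\textbf{Step 1: the two orbit bijections.} Apply Lemma~\ref{lem:orbeq} with the transitive factor $M_1$ and the remaining space $X \times M_2$, on which $G$ acts diagonally. This gives a bijection $\Phi_1 : (X\times M_1\times M_2)/G \to (X\times M_2)/H_1$. Symmetrically, with transitive factor $M_2$ and remaining space $X \times M_1$ (after reordering coordinates, which is a $G$-equivariant bijection), we get $\Phi_2 : (X\times M_1\times M_2)/G \to (X\times M_1)/H_2$. Fix canonicalization maps $\rho_1, \rho_2$ as in~\eqref{eq:fakemov}, and put
\[
T_1(x,p,q) = (\rho_1(p)\cdot x, \rho_1(p)\cdot q), \qquad T_2(x,p,q) = (\rho_2(q)\cdot x, \rho_2(q)\cdot p).
\]
By the commutative diagram~\eqref{eq:comm_diag}, we have $\pi_{H_i}\circ T_i = \Phi_i \circ \pi_G$ for $i=1,2$.

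\textbf{Step 2: existence.} Given an $H_2$-invariant $f_{H_2}$, Theorem~\ref{thm:quotUni} yields a unique $\tilde f_{H_2}$ with $f_{H_2} = \tilde f_{H_2}\circ\pi_{H_2}$. Define $f_{H_1} := \tilde f_{H_2}\circ\Phi_2\circ\Phi_1^{-1}\circ\pi_{H_1}$. This is well defined since $\Phi_1$ is bijective. It is $H_1$-invariant because it factors through $\pi_{H_1}$. We then compute
\[
f_{H_1}\circ T_1 = \tilde f_{H_2}\circ\Phi_2\circ\Phi_1^{-1}\circ\Phi_1\circ\pi_G = \tilde f_{H_2}\circ \Phi_2\circ\pi_G = \tilde f_{H_2}\circ\pi_{H_2}\circ T_2 = f_{H_2}\circ T_2 .
\]

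\textbf{Step 3: uniqueness.} Suppose $f'$ is $H_1$-invariant and satisfies $f'\circ T_1 = f_{H_2}\circ T_2$. Then $\tilde f'\circ \Phi_1\circ\pi_G = \tilde f_{H_1}\circ\Phi_1\circ\pi_G$. Since $\pi_G$ is surjective and $\Phi_1$ is bijective, $\tilde f' = \tilde f_{H_1}$, and hence $f' = f_{H_1}$. Alternatively, one can argue directly that $T_1$ is surjective onto $X\times M_2$, because $(x,q) = T_1(x,p_0,q)$ when $\rho_1(p_0)\in H_1$; in general $\pi_{H_1}\circ T_1$ is surjective.

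\textbf{Main obstacle.} The delicate point is Step 1: confirming that the intertwining identity $\pi_{H_i}\circ T_i=\Phi_i\circ\pi_G$ holds exactly as stated, with $\Phi_i$ from~\eqref{eq:fwd_def} and with the coordinate reordering in the second reduction. Concretely, I would check that $H_1\cdot(\rho_1(p)\cdot(x,q)) = G_{p,p_0}\cdot(x,q)$. This holds because $G_{p,p_0} = H_1\rho_1(p)$, which follows from the observation that $g,g'\in G_{p,p_0}$ implies $g g'^{-1}\in H_1$. Everything else is formal.
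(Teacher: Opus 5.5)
Your proof is correct and is essentially the paper's argument. The paper runs the same chain elementwise: it uses $G_{p,p_0}\cdot(x,q)=H_1\cdot(\rho_1(p)\cdot(x,q))$ to get $\pi_{H_1}\circ T_1=\Phi_1\circ\pi_G$, and then the definition of $\Phi_2$ to reach $\tilde f_{H_2}\circ\pi_{H_2}\circ T_2$; your Step 3 additionally supplies the uniqueness argument that the paper leaves implicit. One small slip in your aside: $\rho_1(p_0)$ always lies in $H_1$, so $T_1(x,p_0,q)=(x,q)$ actually requires $\rho_1(p_0)=e$. In general, the preimage $(\rho_1(p_0)^{-1}\cdot x,\,p_0,\,\rho_1(p_0)^{-1}\cdot q)$ still shows $T_1$ is surjective.
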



\subsubsection{The Group as a Homogeneous Space}

When $M = G$ itself (viewed as a homogeneous space under left multiplication), the stabilizer of the identity is trivial, and we obtain a complete characterization of $G$-invariants on $X \times G$

\begin{corollary}
\label{col:group_lat}
Let $X \times G$ be equipped with the diagonal $G$-action $(g, (x, h)) \mapsto (g \cdot x, gh)$. Then every $G$-invariant function $f_G : X \times G \to Y$ is of the form
\[
f_G(x,g) = f(g^{-1} \cdot x)
\]
for some function $f : X \to Y$.
\end{corollary}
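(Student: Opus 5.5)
This is a direct specialization of Theorem~\ref{thm:redIso} with $M = G$ acting on itself by left multiplication, $(g,h) \mapsto gh$. The plan is to check the hypotheses, pick an explicit canonicalization map, and read off the formula. A short self-contained verification is also available.

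First I check the hypotheses. Left multiplication is transitive on $G$, since for any $h, k \in G$ the element $kh^{-1}$ sends $h$ to $k$. Take the reference point $p_0 = e$. Its isotropy is $H = \mathrm{Stab}_G(e) = \{ g \mid ge = e \} = \{e\}$, the trivial subgroup.

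Next I choose the canonicalization map. The condition \eqref{eq:fakemov} requires $\rho(g)\cdot g = e$, so I set $\rho(g) = g^{-1}$. This agrees with the natural choice $\rho(gH) = g^{-1}$ noted after Theorem~\ref{thm:redIso}, since here $gH = \{g\}$. The map $T$ then becomes $T(x,g) = g^{-1}\cdot x$.

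Now I apply the converse part of Theorem~\ref{thm:redIso}. Every $G$-invariant $f_G$ has the form $f_G(x,g) = f_H(g^{-1}\cdot x)$ for some $H$-invariant $f_H : X \to Y$. Because $H$ is trivial, $H$-invariance imposes no constraint, so $f_H$ can be any function $f : X \to Y$. This gives the claim.

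As a sanity check that avoids the quotient machinery, one can argue directly. Given a $G$-invariant $f_G$, define $f(x) := f_G(x,e)$. Invariance under the element $g^{-1}$ gives
\[
f_G(x,g) = f_G\bigl(g^{-1}\cdot x,\, g^{-1}g\bigr) = f(g^{-1}\cdot x).
\]
Conversely, $(x,h) \mapsto f(h^{-1}\cdot x)$ is invariant, because
\[
(gh)^{-1}\cdot(g\cdot x) = h^{-1}\cdot x .
\]

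There is no real obstacle. The only points that need care are that the action on the second factor is left multiplication, so that $\rho(g) = g^{-1}$ satisfies \eqref{eq:fakemov}, and that triviality of $H$ makes $X/H = X$, so the quotient map $\pi_H$ is the identity.
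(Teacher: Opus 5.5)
Your proposal is correct and matches the paper's proof. Both specialize Theorem~\ref{thm:redIso} to $M=G$ with $p_0=e$, trivial isotropy $H=\{e\}$, and the forced (hence unique) canonicalization $\rho(g)=g^{-1}$, so that $H$-invariance places no constraint on $f$. Your extra direct check via $f(x):=f_G(x,e)$ is also valid, and it shows the corollary holds without any of the orbit-equivalence machinery.
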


\vspace{5pt}
\begin{remark}
Corollary~\ref{col:group_lat} generalizes Theorem~4.1 of~\cite{garcia-castellanos_equivariant_2025}, which assumes $G$ is a Lie group and $X$ is a product of Riemannian manifolds with regular $G$-actions. Our formulation imposes no such restrictions. Additionally, it provides a formal justification for the energy-based canonicalization formulation introduced in Equation~$(6)$ of~\cite{kaba_equivariance_2023}.
\end{remark}



%% file: sections/experiments.tex
\section{Use Case: Equivariant Neural Fields}
\label{sec:usecase}

We demonstrate the practical utility of our framework by extending the Equivariant Neural Eikonal Solver of \citet{garcia-castellanos_equivariant_2025}, an instance of Equivariant Neural Fields (ENFs)~\citep{wessels_grounding_2024}. On a Riemannian manifold $\mathcal{X}$ endowed with a velocity field, the Eikonal equation defines a travel-time function $T \colon \mathcal{X} \times \mathcal{X} \to \mathbb{R}^+$ that encodes shortest arrival times between source--receiver pairs. In this setting, the Equivariant Neural Field $f_{\theta} \colon \mathcal{X} \times \mathcal{X} \times \mathcal{Z} \to \mathbb{R}^+$
represents a family of Eikonal solutions and is conditioned on instance-specific latent variables $z \in \mathcal{Z}$. Each latent variable corresponds to a particular travel-time solution induced by a fixed velocity field.

As discussed in Section~\ref{sec:intro}, the central challenge in designing such ENFs lies in constructing $G\text{-invariant}$ functions on the heterogeneous product
$ \mathcal{X} \times \mathcal{X} \times \mathcal{Z} $. Prior work~\citep{garcia-castellanos_equivariant_2025} sidesteps this challenge by restricting the latent space to the group itself ($\mathcal{Z} = G$). Theorem~\ref{thm:redIso} removes this restriction, enabling conditioning on arbitrary homogeneous spaces of the form $\mathcal{Z} = G/H$.

Crucially, our \textit{generalized reduction to the isotropy} transforms the original problem into that of constructing $H$-invariant functions on $\mathcal{X}^{\times m}$, a classical setting that admits powerful tools from invariant theory, including the moving frame method and Weyl's theorem (see Appendix~\ref{app:relwork}).

\begin{algorithm}[t]
\caption{Computing Separating $G$-Invariants on $X^{\times m} \times G/H $}
\label{alg:separating-invariants}

\DontPrintSemicolon
\SetAlgoLined
\SetKwInput{KwInput}{Input}
\SetKwInput{KwOutput}{Output}
\SetKwComment{Comment}{$\triangleright$\ }{}
\vspace{1pt}
\KwInput{Group $G$, subgroup $H \leq G$, $G$-space $X$, $m \in \mathbb{N}$}
\vspace{1pt}
\KwOutput{Separating $G$-invariants on $X^{\times m} \times G/H $}

\BlankLine

\Comment*[l]{Reduce to $H$-invariants on $X^{\times m}$ (Thm.~\ref{thm:redIso})}
Let $\{f_H^\ell(x_1,\ldots,x_m)\}_{\ell=1}^{L}$ be separating $H$-invariants \tcp*{e.g.\ via moving frame}

\vspace{2pt}

\BlankLine

\Comment*[l]{Lift to $G$-invariants via canonicalization}
$\rho \colon G/H \to G$, \quad $\rho(gH) := g^{-1}$ \tcp*{for any representative $g$ of $gH$ }

\BlankLine
\vspace{-2pt}

\Return $\Bigl\{(x_1,\ldots,x_m, gH) \mapsto f_H^\ell\bigl(g^{-1} \cdot x_1,\ldots,g^{-1} \cdot x_m\bigr)\Bigr\}_{\ell=1}^{L}$
\vspace{1pt}
\end{algorithm}
Algorithm~\ref{alg:separating-invariants} operationalizes this reduction into a practical procedure for constructing separating $G\text{-invariants}$, which ensure maximal expressivity for invariant function approximation (see Proposition~\ref{prop:max_express}). Explicit instantiations for 2D and 3D Euclidean and spherical geometries, under various choices of latent space, are provided in Appendix~\ref{app:invariants}.

%% file: sections/conclusion.tex
\section{Discussion and Future Work}
\label{sec:discussion}

We presented a \emph{generalized reduction to the isotropy} principle:  if $G$ acts transitively on $M$, then there is a
canonical bijection of orbit spaces $(X \times M)/G \cong X/H$, where $H$ is an isotropy subgroup. This identification relates $G$-invariants on heterogeneous products to $H$-invariants on a reduced space, and shows that these computations can be transferred without information loss.


We demonstrated the practical utility of this framework by extending Equivariant Neural Fields to arbitrary homogeneous conditioning spaces, removing a significant structural limitation of prior work. More broadly, our reduction principle applies to other domains where symmetries are a powerful inductive bias, such as equivariant reinforcement learning (see Appendix~\ref{app:discussion} for further discussion).

A systematic empirical evaluation comparing different choices of conditioning space and their effects on learning dynamics remains an important direction for future work.

%% file: sections/acknowledgements.tex
\section*{Acknowledgements}
Alejandro García Castellanos is funded by the Hybrid Intelligence Center, a 10-year programme funded through the research programme Gravitation which is (partly) financed by the Dutch Research Council (NWO). This publication is part of the project SIGN with file number VI.Vidi.233.220 of the research programme Vidi which is (partly) financed by the Dutch Research Council (NWO) under the grant \url{https://doi.org/10.61686/PKQGZ71565}. Remco Duits
and Gijs Bellaard gratefully acknowledge NWO for financial support via  VIC.202.031.

%% file: sections/appendix.tex
\section{Related Work} \label{app:relwork}

As discussed in Section~\ref{sec:intro}, the most commonly studied setting in machine learning for joint invariants involves products $X^{\times m}$, where all factors are copies of a single space. In this setting, numerous techniques have been developed for computing invariants, including Weyl's theorem \citep{weyl_classical_1946, villar_scalars_2021}, infinitesimal methods \citep{andreassen2020joint}, moving frames \citep{olver_joint_2001}, differential invariants \citep{olver_equivalence_1995, sangalli_differential_2022}, representer-based approaches~\citep{bellaard2025universal}, modern canonicalization  \citep{kaba_equivariance_2023, shumaylov_lie_2024}, and frame averaging methods \citep{puny_frame_2022}. Additionally, sketching techniques have been proposed to reduce the number of required invariants while preserving maximal expressivity \citep{dym_low_2023}.


While some of these techniques can be adapted to heterogeneous products, several powerful methods, such as Weyl's theorem, are not directly applicable to this more general setting. This is a significant limitation, as Weyl's theorem has been a key component in many modern applications of invariant theory to machine learning \citep{dym_low_2023, villar_scalars_2021}. However, as we demonstrate in Appendix~\ref{app:invariants}, our proposed \textit{Generalized Reduction to the Isotropy} restores the applicability of these techniques in many common scenarios, enabling practitioners to employ their preferred methodology.

\section{Mathematical Preliminaries}
\label{app:mathprel}

In this section, we present the necessary mathematical foundations. For a comprehensive treatment of these topics, we refer the reader to \cite{dummit_abstract_2003} and  \cite{olver_equivalence_1995}.

We are interested in (symmetry) groups, which are algebraic constructs that consist of a set $G$ and a group product---which we denote as a juxtaposition---that satisfies certain axioms, such as the existence of an identity element $e \in G$ such that for all $g \in G$ we have $eg = ge = g$, closure such that for all $g, h \in G$ we have $gh \in G$, the existence of an inverse $g^{-1}$ for each $g$ such that $g^{-1}g = e$, and associativity such that for all $g, h, i \in G$ we have $(gh)i = g(hi)$. A Lie group $G$ is a smooth manifold with group operations that are smooth.

A (left) group action on a set $X$ is a map $\mu : G \times X \to X$ satisfying $\mu(e, x) = x$ and $\mu(g, \mu(h, x)) = \mu(gh, x)$ for all $x \in X$, $g, h \in G$. When $\mu$ is clear from context we write $g \cdot x$. The orbit of a point $x \in X$ under $G$ is the set $G \cdot x := \{g \cdot x \mid g \in G\}$. The orbit space $X/G$ is the quotient space obtained by identifying points in $X$ that lie in the same orbit under the $G$-action. Formally, $X/G := \{G \cdot x \mid x \in X\}$ consists of all distinct orbits, where each orbit represents an equivalence class under the relation $x \sim y \Leftrightarrow \exists g \in G$ such that $y = g \cdot x$. These equivalence classes partition $X$ into mutually disjoint subsets whose union equals $X$, yielding a canonical decomposition that reflects the underlying symmetry of the group action.

The isotropy subgroup (or stabilizer) of $G$ at $x \in X$ is $\mathrm{Stab}_G(x) := \{g \in G \mid g \cdot x = x\}$. A group $G$ acts \textit{freely} on $X$ if $\mathrm{Stab}_G(x) = \{e\}$ for all $x \in X$, meaning no non-identity element fixes any point. A group acts \textit{regularly} on a manifold $\mathcal{M}$ if each point has arbitrarily small neighborhoods whose intersections with each orbit are connected. In practical applications, the groups of interest typically act regularly.

A group $G$ acts \emph{transitively} on $X$ if for any $x, y \in X$, there exists $g \in G$ such that $g \cdot x = y$. Equivalently, the action is transitive if there is exactly one orbit, i.e., $G \cdot x = X$ for any $x \in X$. A set $X$ equipped with a transitive $G$-action is called a \emph{homogeneous space}. By the orbit-stabilizer theorem, any homogeneous space $X$ is $G$-equivariantly isomorphic to the quotient space $G / \mathrm{Stab}_G(x_0)$ for any choice of base point $x_0 \in X$; different choices of base point yield conjugate stabilizers and hence isomorphic quotients \citep{dummit_abstract_2003}.

For the applications in Appendix~\ref{app:invariants}, we consider the following matrix Lie groups acting on $\mathbb{R}^n$. The orthogonal group $\mathrm{O}(n) := \{R \in \mathbb{R}^{n \times n} \mid RR^\top = I_n\}$ acts by $R \cdot x = Rx$, and its subgroup $\mathrm{SO}(n) := \{R \in \mathrm{O}(n) \mid \det(R) = 1\}$ comprises rotations. The Euclidean group $\mathrm{E}(n) := \mathbb{R}^n \rtimes \mathrm{O}(n)$ consists of elements $g = (t, R)$ with action $g \cdot x = Rx + t$ and composition $(t, R)(t', R') = (t + Rt', RR')$; restricting to $\mathrm{SO}(n)$ yields the special Euclidean group $\mathrm{SE}(n)$ of rigid motions. Both $\mathrm{O}(n)$ and $\mathrm{SO}(n)$ act on the unit sphere $S^{n-1} \subset \mathbb{R}^n$ via the inherited action $R \cdot x = Rx$.

Moreover, we will use the following properties of invariant functions, which establish the connection between orbit separation and universal approximation.

\begin{definition}[Orbit Separation; see \citep{dym_low_2023}]
\label{def:orbit_sep}
Let $G$ be a group acting on a set $X$, let $Y$ be a set, and let $f : X \to Y$ be an invariant function. We say that $f$ \emph{separates orbits} if $f(x) = f(y)$ implies that $x = g\cdot y$ for some $g \in G$. We say that a finite collection of invariant functions $f_i : \mathcal{X} \to \mathcal{Y}$, $i = 1, \ldots, N$ separates orbits, if the concatenation $(f_1(x), \ldots, f_N(x))$ separates orbits.
\end{definition}

\begin{proposition}[Maximal Expressivity; see \citep{dym_low_2023}]
\label{prop:max_express}
Let $\mathcal{X}$ be a topological space, and $G$ a group which acts on $\mathcal{X}$. Let $K \subset \mathcal{X}$ be a compact set, and let $f^{\mathrm{inv}} : \mathcal{X} \to \mathbb{R}^N$ be a continuous $G$-invariant map that separates orbits. Then for every continuous invariant function $f : \mathcal{X} \to \mathbb{R}$, there exists some continuous $f^{\mathrm{general}} : \mathbb{R}^N \to \mathbb{R}$ such that
\[
f(x) = f^{\mathrm{general}}(f^{\mathrm{inv}}(x)), \quad \forall x \in K.
\]
\end{proposition}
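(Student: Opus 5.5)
The statement to prove is Proposition~\ref{prop:max_express}. I would prove it by factoring $f$ through the image of $f^{\mathrm{inv}}$ and then extending continuously to all of $\mathbb{R}^N$.

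\textbf{Step 1: define $f^{\mathrm{general}}$ on the image of $K$.} Set $Z := f^{\mathrm{inv}}(K) \subset \mathbb{R}^N$. For $z \in Z$, define $h(z) := f(x)$ for any $x \in K$ with $f^{\mathrm{inv}}(x) = z$.

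This is well defined. If $f^{\mathrm{inv}}(x) = f^{\mathrm{inv}}(y)$, then orbit separation (Definition~\ref{def:orbit_sep}) gives $x = g \cdot y$ for some $g \in G$. Invariance of $f$ then gives $f(x) = f(y)$. This is exactly the universal property of Theorem~\ref{thm:quotUni}, applied to the equivalence relation on $K$ induced by $f^{\mathrm{inv}}$; that relation coincides with the orbit relation restricted to $K$.

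\textbf{Step 2: continuity of $h$ on $Z$.} Here compactness is used. Let $\tilde f := f^{\mathrm{inv}}|_K : K \to Z$, a continuous surjection. Since $K$ is compact, $Z$ is compact. It is also Hausdorff, as a subset of $\mathbb{R}^N$.

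I claim $\tilde f$ is a closed map. A closed $C \subset K$ is compact, so $\tilde f(C)$ is compact and hence closed in the Hausdorff space $Z$. A closed continuous surjection is a quotient map. The function $f|_K$ is continuous and constant on the fibres of $\tilde f$, and $h \circ \tilde f = f|_K$, so $h$ is continuous on $Z$.

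Alternatively, I could argue by sequences if $K$ is metrizable. If $z_n \to z$ but $h(z_n) \not\to h(z)$, pick preimages $x_n$ and extract a convergent subnet $x_n \to x$. Then $f^{\mathrm{inv}}(x) = z$ and $f(x_n) \to f(x) = h(z)$, a contradiction. The quotient-map argument avoids any metrizability assumption, so I would present that one.

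\textbf{Step 3: extend to $\mathbb{R}^N$.} The set $Z$ is closed in the metric, hence normal, space $\mathbb{R}^N$, and $h : Z \to \mathbb{R}$ is continuous. The Tietze extension theorem gives a continuous $f^{\mathrm{general}} : \mathbb{R}^N \to \mathbb{R}$ with $f^{\mathrm{general}}|_Z = h$. Then for every $x \in K$,
\[
f^{\mathrm{general}}(f^{\mathrm{inv}}(x)) = h(f^{\mathrm{inv}}(x)) = f(x).
\]

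\textbf{Main obstacle.} The main step is Step~2. Without compactness, $h$ can be discontinuous even though it is well defined. Compactness of $K$ combined with the Hausdorff property of $\mathbb{R}^N$ is what makes $\tilde f$ a closed, hence quotient, map and so secures continuity. Steps~1 and~3 are routine.
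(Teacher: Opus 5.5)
Your proof is correct and follows the standard argument for this result. The paper gives no proof of its own and defers to \citet{dym_low_2023}, whose argument has the same three steps: well-definedness on $f^{\mathrm{inv}}(K)$ from separation plus invariance, continuity on the compact image from compactness of $K$, and a Tietze extension to $\mathbb{R}^N$. Your closed-map/quotient formulation of the continuity step is equivalent to noting that $h^{-1}(C) = f^{\mathrm{inv}}\bigl(K \cap f^{-1}(C)\bigr)$ is compact, hence closed, for every closed $C \subset \mathbb{R}$.
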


\section{Reduction to the isotropy}
\label{app:red2iso}

In this section, we present the complete proofs and additional properties for the results presented in Section~\ref{sec:meth}.

\subsection{Proof of Orbit equivalence}
\label{app:orbiteqproof}

Let a group $G$ acting on a set $M$. We define $G_{p,q} := \{g\in G \mid g \cdot p = q\}$ to be the set of all group elements mapping $p\in M$ to $q\in M$.

\newtheorem*{lmOrbEq}{Lemma~\ref{lem:orbeq} (Restated)}
\begin{lmOrbEq}
    Let $G$ be a group acting on transitively on the set $M$ and (not necessarily transitively) on the set $X$. Let $p_0$ be any point in $M$ and define the subgroup $H = \mathrm{Stab}_G(p_0)$ (also called an \textit{isotropy} subgroup). Then, the map $\Phi : (X\times M)/G\to X/H$, given by 
     \begin{equation}
        \Phi(G\cdot(x, p)) = G_{p,p_0} \cdot x \in X/H,
    \end{equation}
   is a bijection of the orbit spaces $(X \times M)/G$ and $X/H$.
\end{lmOrbEq}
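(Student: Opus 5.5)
The plan is to verify four things in order: that $G_{p,p_0}\cdot x$ is an $H$-orbit, that $\Phi$ does not depend on the chosen representative of the $G$-orbit, that $\Phi$ is injective, and that $\Phi$ is surjective. Everything rests on the coset structure of $G_{p,p_0}$. By transitivity, $G_{p,p_0}$ is nonempty. For any fixed $g'\in G_{p,p_0}$ I will show that $G_{p,p_0}=Hg'$. Indeed, if $h\in H$ then $hg'\cdot p=h\cdot p_0=p_0$. Conversely, if $g\in G_{p,p_0}$ then $gg'^{-1}\cdot p_0=g\cdot p=p_0$, so $gg'^{-1}\in H$. Hence $G_{p,p_0}\cdot x=H\cdot(g'\cdot x)$, which is a genuine element of $X/H$.

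\emph{Well-definedness.} Suppose $G\cdot(x,p)=G\cdot(x',p')$, so that $x'=k\cdot x$ and $p'=k\cdot p$ for some $k\in G$. Then $G_{p',p_0}=G_{p,p_0}k^{-1}$, because $g\cdot(k\cdot p)=p_0$ holds iff $gk\in G_{p,p_0}$. It follows that
\[
G_{p',p_0}\cdot x'=G_{p,p_0}k^{-1}k\cdot x=G_{p,p_0}\cdot x,
\]
so $\Phi$ is independent of the representative.

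\emph{Injectivity.} Suppose $G_{p,p_0}\cdot x=G_{q,p_0}\cdot y$. Pick $g_1\in G_{p,p_0}$ and $g_2\in G_{q,p_0}$. Then $H\cdot(g_1\cdot x)=H\cdot(g_2\cdot y)$, so there is $h\in H$ with $g_2\cdot y=hg_1\cdot x$. Set $k:=g_2^{-1}hg_1$. Then $k\cdot x=y$, and
\[
k\cdot p=g_2^{-1}h\cdot p_0=g_2^{-1}\cdot p_0=q.
\]
Hence $(y,q)=k\cdot(x,p)$, and the two $G$-orbits coincide.

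\emph{Surjectivity.} Given $H\cdot x\in X/H$, note that $G_{p_0,p_0}=H$. Therefore $\Phi(G\cdot(x,p_0))=H\cdot x$.

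The main point needing care is the coset identity $G_{p,p_0}=Hg'$, used with the correct side ($H$ on the left). It simultaneously guarantees that $\Phi$ lands in $X/H$ and drives the injectivity argument. Once that identity is settled, the remaining steps are direct computations with the action axioms.
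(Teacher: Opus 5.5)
Your proof is correct and uses the same ingredients as the paper's: nonemptiness of $G_{p,p_0}$ by transitivity, the coset identity $G_{p,p_0}=Hg'$ (which gives $G_{p,p_0}\cdot x=H\cdot(g'\cdot x)$), and independence of the representative by transporting $G_{p,p_0}$ along $k$. The only difference is packaging: the paper constructs the explicit inverse $\Psi(H\cdot x)=G\cdot(x,p_0)$, checks that it is well-defined, and verifies both compositions, whereas you prove injectivity and surjectivity directly. Your surjectivity step is exactly $\Phi\circ\Psi=\mathrm{id}$, and your element $k=g_2^{-1}hg_1$ combines $\Psi\circ\Phi=\mathrm{id}$ with the well-definedness of $\Psi$ in a single computation.
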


\begin{proof}
    We will show the bijection between $(X \times M)/G$ and $X / H$ by constructing two maps, the \textit{forward} map $\Phi : (X \times M)/G \to X / H$ and the \textit{backward} map $\Psi : X / H \to (X \times M)/G$, which are each others inverse.\\
    
    \noindent \textbf{1) Forward map def:}
    Let $o \in (X \times M)/G$. 
    Pick any $(x, p) \in o$. 
    We define the forward map by 
    \[
        \Phi(o) = G_{p,p_0} \cdot x.
    \]
    
    Notice that $G_{p,q}$ is \textit{never} empty since $G$ acts transitively on $M$. Let's confirm that this map is well defined, meaning that the choice of $(x,p) \in o$ does \textit{not} matter. Let's choose another candidate $(x', p')$ from $o$, then there exists $g'\in G$ such that $g'\cdot (x, p)=(x',p')$. Therefore, we would want to check whether $G_{p, p_0} \cdot x = G_{p', p_0} \cdot x'$. Let $a\in G_{p, p_0} \cdot x$, and $b\in G_{p', p_0} \cdot x'$, then
 \[
\left\{
\begin{aligned}
a &= g\cdot x \quad \text{s.t. } g\cdot p = p_0 \\
b &= h\cdot x' \quad \text{s.t. } h\cdot p' = p_0
\end{aligned}
\right.
\]
    \noindent $\bigl(\supset\bigr)$ Since $p_0 = h\cdot p' = hg' \cdot p$, then $hg' \in G_{p, p_0}$, and $b=h\cdot x'=hg' \cdot x \in G_{p, p_0}\cdot x$. Thus $G_{p, p_0} \cdot x \supset G_{p', p_0} \cdot x'$.

    \noindent $\bigl(\subset\bigr)$ Since $p_0 = g\cdot p = g(g')^{-1}\cdot p'$, then $g(g')^{-1} \in G_{p', p_0}$, and $a=g\cdot x=g(g')^{-1} \cdot x' \in G_{p', p_0}\cdot x'$. Thus $G_{p, p_0} \cdot x \subset G_{p', p_0}\cdot  x'$.\\
    
    \noindent Lastly, we need to check that $\Phi(o) = G_{p,p_0} \cdot x\in X/H$. To do so, we will see that
    \begin{equation}
    \label{eq:identification}
        G_{p, p_0} \cdot x = H\cdot (g\cdot x) \in X/H \quad \text{for any } g\in G_{p, p_0}.
    \end{equation}

    Let $a\in  G_{p, p_0} \cdot x$, and $b\in H\cdot(g\cdot x)$, then
  \[
\left\{
\begin{aligned}
a &= g'\cdot x \quad \text{s.t. } g'\cdot p = p_0 \\
b &= h g \cdot x \quad \text{s.t. } g\cdot p = p_0,\ \text{and } h\cdot p_0 = p_0
\end{aligned}
\right.
\]
    
    \noindent $\bigl(\supset\bigr)$ Since $hg\cdot p = h\cdot p_0 = p_0$, then $hg \in G_{p, p_0}$ for all $h\in H$. Thus, $Hg \subset G_{p, p_0}$ for all $g \in G_{p, p_0}$, and therefore, $H\cdot (g\cdot x) \subset G_{p, p_0}\cdot x $ for all $g \in G_{p, p_0}$.

    \noindent $\bigl(\subset\bigr)$ We want to check whether $G_{p, p_0} \subset Hg$, i.e., if given $g, g'\in G_{p, p_0}$ there exist $h\in H$ such as $g' = hg$. Notice that this is equivalent to verifying if $g' g^{-1} = h \in H$. Since $g'g^{-1} \cdot p_0 = g' \cdot p = p_0$, then we see that $g'g^{-1}\in H = \mathrm{Stab}_G(p_0)$. Thus, $G_{p, p_0} \subset Hg$ for all $g \in G_{p, p_0}$, and therefore, $G_{p, p_0} \cdot x \subset H\cdot(g\cdot x)$ for all $g \in G_{p, p_0}$.\\

    \noindent \textbf{2) Backward map def:}
    Let $q \in X / H$.
    Pick any $x \in q$.
    We define the backward map by 
    \begin{equation}
    \label{eq:forward_abs}
        \Psi(q) = G\cdot (x, p_0). 
    \end{equation}

    Again, let's confirm that this map is well defined, meaning that the choice of $x \in q$ does \textit{not} matter. Let's choose another candidate $y$ from $q$, then there exists $h\in H$ such that $h\cdot  x=y$. Therefore, we would want to check whether $G\cdot (x, p_0) = G\cdot (y, p_0)$. Let $a\in G\cdot (x, p_0)$, and $b\in G\cdot (y, p_0)$, then
 \[
\left\{
\begin{aligned}
a &= g\cdot (x, p_0 ) \quad \text{for some } g \in G \\
b &= g'\cdot (y, p_0 ) \quad \text{for some } g' \in G
\end{aligned}
\right.
\]
    
    \noindent $\bigl(\supset\bigr)$ Since $b = g' \cdot (y, p_0 ) = g'\cdot (h\cdot x, h\cdot p_0) = g'h\cdot (x, p_0 )$, and $H$ is a subgroup of $G$, then $b\in G\cdot (x, p_0 )$. Thus, $G\cdot (x, p_0 ) \supset G\cdot (y, p_0 )$.

    \noindent $\bigl(\subset\bigr)$ Since $a = g \cdot (x, p_0 ) = g\cdot (h^{-1}\cdot y, h^{-1}\cdot p_0) = gh^{-1}\cdot (y, p_0 )$, and $H$ is a subgroup of $G$, then $a\in G\cdot (y, p_0 )$. Thus, $G\cdot (x, p_0 ) \subset G\cdot (y, p_0 )$. \\

    \noindent \textbf{3) Inverse verification:} We check that the forward and backward map are each other inverse.
    Let $o \in (X \times M)/G$. 
    Pick any $(x, p) \in o$ and $g \in G_{p,p_0}$.
    We have
    $$ (\Psi \circ \Phi)(o) = \Psi(G_{p,p_0}\cdot  x) = \Psi(H \cdot (g\cdot  x))=G\cdot ( g\cdot x, p_0) = G\cdot (g\cdot (x,p)) = G\cdot (x,p) = o. $$
    Now let $q \in X / H$.
    Pick any $x \in q$. Notice that $G_{p, p} = \mathrm{Stab}_G(p) = \{g \mid g\cdot p = p\}$.
    Thus, we have
    $$ (\Phi \circ \Psi)(q) = \Phi(G\cdot (x, p_0 )) = G_{p_0, p_0} \cdot x = H \cdot x = q. $$

    So, indeed, the forward and backward map are each other inverses, showing that $(X \times M)/G$ and $X / H$ are in bijection.



\end{proof}

\subsection{Properties of Reduction to the isotropy}
\label{app:properties}

\subsubsection{Proof of Flexibility of Choice of Reduction}
\label{app:flexprop}

\newtheorem*{colMultHom}{Corollary~\ref{col:mult_hom} (Restated)}

\begin{colMultHom}
Let $\Phi_1:( X \times M_1 \times M_2)/G \to (X\times M_2)/H_1$, and $\Phi_2:( X \times M_1 \times M_2)/G \to ( X \times M_1)/H_2$ defined as in Equation~\eqref{eq:fwd_def}, and let $\rho_1:M_1\to G$, and $\rho_2:M_2\to G$ be canonicalization maps satisfying the identity in Equation~\eqref{eq:fakemov}, and their corresponding induced maps $T_1:  X \times M_1 \times M_2 \to  X \times M_2$, and $T_2:  X \times M_1 \times M_2 \to  X \times M_1$.

Then for a given $H_2$ invariant $f_{H_2}:  X \times M_1 \to Y$, the identity
\[
f_{H_1} = \tilde{f}_{H_2} \circ \Phi_2\circ \Phi_1^{-1} \circ \pi_{H_1}
\]
defines a unique $H_1$ invariant function $f_{H_1}:  X \times M_2 \to Y$, such that the following diagram commutes

\[\begin{tikzcd}[cramped,column sep=small]
	&&& {(X\times M_1 \times M_2)/G} \\
	\\
	&&& {X\times M_1 \times M_2} \\
	\\
	{( X \times M_2)/H_1} && { X \times M_2} && { X \times M_1} && {( X \times M_1)/H_2} \\
	\\
	\\
	&&& Y
	\arrow["{{\Phi_1}}"', from=1-4, to=5-1]
	\arrow["{{\Phi_2}}", from=1-4, to=5-7]
	\arrow["{{\pi_G}}", from=3-4, to=1-4]
	\arrow["{{T_1}}"', from=3-4, to=5-3]
	\arrow["{{T_2}}", from=3-4, to=5-5]
	\arrow["{{f_G}}"', from=3-4, to=8-4]
	\arrow["{{\pi_{H_1}}}"', from=5-3, to=5-1]
	\arrow["{{f_{H_1}}}", dashed, from=5-3, to=8-4]
	\arrow["{{\pi_{H_2}}}", from=5-5, to=5-7]
	\arrow["{{f_{H_2}}}"', from=5-5, to=8-4]
	\arrow["{{\tilde{f}_{H_2}}}"', curve={height=-30pt}, from=5-7, to=8-4]
\end{tikzcd}\]

for $f_G:X\times M_1 \times M_2 \to Y$ the $G$ invariant given as in Equation~\eqref{eq:inv_def}, i.e., $f_G=f_{H_2}\circ T_2$.
    
\end{colMultHom}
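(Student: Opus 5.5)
The proof reduces twice: first apply Lemma~\ref{lem:orbeq} and Theorem~\ref{thm:redIso} to each homogeneous factor, then transport invariants across the resulting orbit spaces. For $\Phi_1$, view $X\times M_1\times M_2$ as $(X\times M_2)\times M_1$ with $G$ transitive on $M_1$. Lemma~\ref{lem:orbeq} makes $\Phi_1:(X\times M_1\times M_2)/G\to (X\times M_2)/H_1$ a bijection. Likewise, viewing the space as $(X\times M_1)\times M_2$ makes $\Phi_2$ a bijection onto $(X\times M_1)/H_2$. Hence $\Phi_2\circ\Phi_1^{-1}:(X\times M_2)/H_1\to (X\times M_1)/H_2$ is a well-defined bijection. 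The maps are $T_1(x,p,q)=\rho_1(p)\cdot(x,q)$ and $T_2(x,p,q)=\rho_2(q)\cdot(x,p)$. By the commutative diagram~\eqref{eq:comm_diag} of Theorem~\ref{thm:redIso}, applied to each factorization, we have $\pi_{H_i}\circ T_i=\Phi_i\circ\pi_G$ for $i=1,2$.

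Next, since $f_{H_2}$ is $H_2$-invariant, Theorem~\ref{thm:quotUni} gives a unique $\tilde f_{H_2}$ with $f_{H_2}=\tilde f_{H_2}\circ\pi_{H_2}$. Define $f_{H_1}:=\tilde f_{H_2}\circ\Phi_2\circ\Phi_1^{-1}\circ\pi_{H_1}$. It factors through $\pi_{H_1}$, so it is $H_1$-invariant by the easy direction of Theorem~\ref{thm:quotUni}. To check $f_{H_1}\circ T_1=f_{H_2}\circ T_2$, compute
\[
f_{H_1}\circ T_1=\tilde f_{H_2}\circ\Phi_2\circ\Phi_1^{-1}\circ\Phi_1\circ\pi_G=\tilde f_{H_2}\circ\Phi_2\circ\pi_G=\tilde f_{H_2}\circ\pi_{H_2}\circ T_2=f_{H_2}\circ T_2 .
\]
The right-hand side is exactly $f_G$ from~\eqref{eq:inv_def}, so the whole diagram commutes.

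For uniqueness, suppose some $H_1$-invariant $f'$ also satisfies $f'\circ T_1=f_{H_2}\circ T_2$. It then suffices to show that $T_1$ is surjective onto $X\times M_2$. This holds because $T_1(x,p_0,q)=\rho_1(p_0)\cdot(x,q)$, and $\rho_1(p_0)\in H_1$ acts bijectively; taking $(x,q)$ to range over $\rho_1(p_0)^{-1}\cdot(X\times M_2)$ covers everything. Surjectivity gives $f'=f_{H_1}$. Alternatively, $f'$ and $f_{H_1}$ both factor through $\pi_{H_1}$ and agree on every class, since $\pi_{H_1}\circ T_1=\Phi_1\circ\pi_G$ is surjective.

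The main obstacle I expect is making precise the identification $\pi_{H_i}\circ T_i=\Phi_i\circ\pi_G$ when the ``$X$'' of Lemma~\ref{lem:orbeq} is itself a product carrying the restricted action. This requires checking that $\rho_1(p)\in G_{p,p_0}$ gives $G_{p,p_0}\cdot(x,q)=H_1\cdot(\rho_1(p)\cdot(x,q))$, which is identity~\eqref{eq:identification} applied to the space $X\times M_2$. Once this is verified, the rest is formal diagram chasing.
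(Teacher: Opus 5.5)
Your proposal is correct and follows the paper's argument. The paper performs the same chase pointwise: it uses identity~\eqref{eq:identification} to rewrite $H_1\cdot(\rho_1(p)\cdot(x,q))$ as $G_{p,p_0}\cdot(x,q)=\Phi_1(G\cdot(x,p,q))$ and then $\Phi_2(G\cdot(x,p,q))$ as $H_2\cdot(\rho_2(q)\cdot(x,p))$, which is exactly your square $\pi_{H_i}\circ T_i=\Phi_i\circ\pi_G$, while your surjectivity argument for $T_1$ additionally justifies the word ``unique'', which the paper leaves implicit.
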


\begin{proof}
First, we can see that $f_{H_1} = \tilde{f}_{H_2} \circ \Phi_2\circ \Phi_1^{-1} \circ \pi_{H_1}$ is $H_1$ invariant since the projection map $\pi_{H_1}$ is $H_1$ invariant by definition.

Second, we will see that $f_G=f_{H_2}\circ T_2=f_{H_1}\circ T_1$. Let $(p,q,x)\in M_1\times  X \times M_2$, then
\begin{align*}
    f_{H_1}\circ T_1(x, p,q)&=f_{H_1}(\rho_1(p)\cdot x, \rho_1(p)\cdot q)\\
    &= \tilde{f}_{H_2} \circ \Phi_2\circ \Phi_1^{-1} \circ \pi_{H_1}(\rho_1(p)\cdot x, \rho_1(p)\cdot q)\\
    &= \tilde{f}_{H_2} \circ \Phi_2\circ \Phi_1^{-1} \left[H_1\cdot (\rho_1(p)\cdot (x, q))\right]\\
    &= \tilde{f}_{H_2} \circ \Phi_2\circ \Phi_1^{-1} (G_{p, p_0} \cdot (x,  q)) & [\text{by Equation~\eqref{eq:identification}}]\\
    &= \tilde{f}_{H_2} \circ \Phi_2(G\cdot (x, p, q))& [\text{by Equation~\eqref{eq:forward_abs}}]\\
    &= \tilde{f}_{H_2}(G_{q, q_0}\cdot (x, p))\\
    &= \tilde{f}_{H_2}\left[H_2\cdot (\rho_2(q)\cdot(x, p))\right]\\
    &= f_{H_2}(\rho_2(q)\cdot x, \rho_2(q)\cdot p)& [\text{by Theorem~\ref{thm:quotUni}}]\\
    &= f_{H_2} \circ T_2 (x, p,q).
\end{align*}
\end{proof}

\newtheorem*{coGroupLat}{Corollary~\ref{col:group_lat} (Restated)}

\subsubsection{Proof for Characterization when $M=G$}
\label{app:groupprop}

\begin{coGroupLat}
    Let the space $X\times G$, then the $G$ invariants $f_G: X\times G \to Y$ are characterized as
    \[
    f_G(x, g) = f(g^{-1}\cdot x)\in Y, \quad \text{for some } f:X\to Y.
    \]
\end{coGroupLat}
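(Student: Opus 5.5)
We plan to derive Corollary~\ref{col:group_lat} directly from Theorem~\ref{thm:redIso} by specializing to $M = G$ with the left-multiplication action $g\cdot h = gh$. This action is transitive: for any $h, h' \in G$ the element $h' h^{-1}$ sends $h$ to $h'$. So the hypotheses of Lemma~\ref{lem:orbeq} and Theorem~\ref{thm:redIso} hold. We take the base point $p_0 = e$, so the isotropy subgroup is $H = \mathrm{Stab}_G(e) = \{g \in G \mid g e = e\} = \{e\}$, which is trivial.

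Next we fix a canonicalization map. For $p = g \in G$ we need $\rho(g)\cdot g = e$, which forces $\rho(g) = g^{-1}$. In fact $G_{g,e} = \{g^{-1}\}$ is a singleton, so $\rho$ is uniquely determined. This is also the natural choice $\rho(gH) = g^{-1}$ noted after Theorem~\ref{thm:redIso}, since here $gH = \{g\}$. The induced map is then $T(x,g) = g^{-1}\cdot x$.

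Since $H$ is trivial, every function $f : X \to Y$ is $H$-invariant, because the only condition is $f(e\cdot x) = f(x)$. Equivalently, $X/H$ is just $X$ itself, and $\pi_H$ is the identity. The converse direction of Theorem~\ref{thm:redIso} says every $G$-invariant $f_G$ on $X \times G$ equals $f_H \circ T$ for a unique $H$-invariant $f_H$. Substituting, $f_G(x,g) = f(g^{-1}\cdot x)$ with $f := f_H$, which is the claim.

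For completeness we would also record the easy direct argument. Given a $G$-invariant $f_G$, set $f(x) := f_G(x,e)$. Invariance applied with $g^{-1}$ gives $f_G(x,g) = f_G(g^{-1}\cdot x, g^{-1}g) = f(g^{-1}\cdot x)$. Conversely, any function of this form is invariant, since $(g'g)^{-1} g'\cdot x = g^{-1}\cdot x$.

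There is no real obstacle here. The only point needing care is confirming that left multiplication is the intended action on the $G$ factor, so that transitivity holds and the stabilizer of $e$ is trivial. Every $f$ then qualifies as an $H$-invariant, and no further constraint needs to be checked.
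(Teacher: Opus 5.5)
Your proposal is correct and follows the paper's own route. The paper also specializes Theorem~\ref{thm:redIso} to $M=G$ under left multiplication, with trivial isotropy $H=\{e\}$ and the unique canonicalization $\rho(g)=g^{-1}$, and observes that every $f:X\to Y$ is then $\{e\}$-invariant. Your additional direct check, setting $f(x):=f_G(x,e)$ and verifying both directions, is a sound self-contained confirmation that the paper does not include.
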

\begin{proof}
    First, notice that any group $G$ is a homogeneous space of itself, i.e., $G$ acts transitively on itself by left multiplication. Moreover, $G$ acts freely on itself, i.e., $H=\mathrm{Stab}_G(g) = \{e\}$ for any $g\in G$. Then, by Theorem~\ref{thm:redIso}, given the (unique) canonicalization map $\rho(g) = \rho(gH)= g^{-1}$, then the diagram 
\[\begin{tikzcd}[cramped]
	{X\times G} && X && Y \\
	\\
	&& {X/{\{e\}}}
	\arrow["T", from=1-1, to=1-3]
	\arrow["{f_G}", curve={height=-30pt}, dashed, from=1-1, to=1-5]
	\arrow["{f_{\{e\}}}", from=1-3, to=1-5]
	\arrow["{\pi_{\{e\}}}", from=1-3, to=3-3]
	\arrow["{\tilde{f}_{\{e\}}}"', from=3-3, to=1-5]
\end{tikzcd}\]
commutes for any arbitrary map $f_{\{e\}}:X\to Y$. 

\end{proof}

\subsubsection{Iterated Reduction via Chains of Subgroups}
\label{app:iterated}

In structured settings, the reduction can be applied iteratively, further simplifying invariant computations.

\begin{definition}[Descending chain of homogeneous spaces]
\label{def:chain}
A \emph{descending chain of homogeneous spaces} for a Lie group $G$ is a nested sequence of closed subgroups
\[
G = G_0 \supset G_1 \supset \cdots \supset G_k,
\]
yielding homogeneous spaces $M_i := G_i / G_{i+1}$ for $i = 0, \ldots, k-1$.
\end{definition}
\vspace{5pt}
\begin{lemma}[Iterated reduction]
\label{lem:iterated}
Given a descending chain as in Definition~\ref{def:chain}, there are bijections
\[
(X \times M_0 \times M_1 \times \cdots \times M_{k-1} )/G_0
\;\longleftrightarrow\;
(X \times M_1 \times \cdots \times M_{k-1})/G_1
\;\longleftrightarrow\; \cdots \;\longleftrightarrow\;
X/G_k.
\]
\end{lemma}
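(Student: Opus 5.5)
The plan is to prove Lemma~\ref{lem:iterated} by induction on the chain length, applying Lemma~\ref{lem:orbeq} once at each step. Before each application we have to be careful about which group is acting and which space plays the role of the transitive factor.

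\textbf{Single step.} Fix $i \in \{0,\dots,k-1\}$ and consider the group $G_i$ acting diagonally on
\[
X_i := X \times M_{i+1} \times \cdots \times M_{k-1}
\]
and on $M_i = G_i/G_{i+1}$. For this to make sense, each $M_j$ with $j \ge i$ must carry a $G_i$-action. Since $G_i \supseteq G_j$, the group $G_i$ acts on $M_j = G_j/G_{j+1}$ only if we interpret the spaces correctly. At step $i$ the factors present are exactly $M_i,\dots,M_{k-1}$, and the acting group is $G_i$. So I will read the statement as follows: the groups are $G_0,\dots,G_{k-1}$, and each $M_j$ carries a fixed action of $G_j$. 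To apply the lemma at step $i$ one needs $G_i$ to act on $M_j$ for $j>i$, which requires those actions to extend. The natural setting, and the one I will adopt, is that all $M_j$ are $G$-spaces, as in the applications, and the subgroups act by restriction. With that reading, $G_i$ acts on $M_i = G_i/G_{i+1}$ by left multiplication, and this action is transitive. The base point $p_0 = eG_{i+1}$ has stabilizer $\{g \in G_i : gG_{i+1} = G_{i+1}\} = G_{i+1}$. Lemma~\ref{lem:orbeq}, applied with the group $G_i$, the transitive space $M_i$ and the auxiliary space $X_i$, then gives a bijection
\[
(X_i \times M_i)/G_i \;\cong\; X_i/G_{i+1}.
\]
This works because the lemma places no restriction on the action on the non-transitive factor $X_i$. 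The only work is reordering factors, since $X \times M_i \times \cdots \times M_{k-1} \cong X_i \times M_i$ equivariantly. A permutation of factors commutes with the diagonal action and therefore induces a bijection of orbit spaces.

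\textbf{Chaining.} Composing these bijections for $i=0,1,\dots,k-1$ yields the displayed sequence. The final step $i=k-1$ has $X_{k-1} = X$ and produces $X/G_k$. Explicitly, the composite map sends $G_0\cdot(x,g_0G_1,\dots)$ to successive canonicalizations, in the sense of Remark~\ref{remk:hom}, by $g_0^{-1}$, then by the appropriate $G_1$-representative, and so on. Writing out this formula is optional, since the bijections are already given by Lemma~\ref{lem:orbeq}.

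\textbf{Main obstacle.} The main obstacle is the interpretation of the actions on the later factors $M_j$. As literally defined, $M_j = G_j/G_{j+1}$ is only a $G_j$-space, so the diagonal $G_0$-action on the product is not defined unless each $M_j$ is given a $G_0$-action, or unless the factors are understood more generally as $G_0$-spaces whose stabilizers form the chain. My first approach is to state explicitly the needed hypothesis: the group $G_i$ acts on every remaining factor, and $M_i$ is $G_i$-homogeneous with isotropy $G_{i+1}$. Under that hypothesis the proof is just the $k$-fold application of Lemma~\ref{lem:orbeq} described above.
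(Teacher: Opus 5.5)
Your proposal is correct and is the argument the paper intends: the paper gives no separate proof, only the remark that the reduction can be applied iteratively. That amounts to applying Lemma~\ref{lem:orbeq} at step $i$ to $G_i$ acting on $M_i$, with base point $eG_{i+1}$ and isotropy $G_{i+1}$, exactly as you do. The compatibility hypothesis you make explicit (each $M_j$ carries a $G_0$-action whose restriction to $G_j$ is the homogeneous action with isotropy $G_{j+1}$) is genuinely needed, is left implicit in the paper, and does not hold in the paper's own Example~\ref{ex:SE3}: $SO(3)$ admits no nontrivial continuous action on $S^1$, so the $S^1$ factor cannot carry an $SE(3)$-action restricting to the transitive $SO(2)$-action.
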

\vspace{5pt}
\begin{example}
\label{ex:SE3}
Since $\mathbb{R}^3 \cong SE(3)/SO(3)$, $S^2 \cong SO(3)/SO(2)$, and $S^1 \cong SO(2)/\{e\}$, we have the descending chain
\[
SE(3) \supset SO(3) \supset SO(2) \supset \{e\}.
\]
Lemma~\ref{lem:iterated} yields the bijections
\[
(S^1 \times \mathbb{R}^3 \times S^2 )/SE(3)
\;\longleftrightarrow\; (S^1 \times S^2)/SO(3)
\;\longleftrightarrow\; S^1/SO(2)
\;\longleftrightarrow\; \{*\}.
\]
Consequently, every $SE(3)$-invariant on $ S^1 \times \mathbb{R}^3 \times S^2 $ is constant.
\end{example}

\section{Computation of Invariants}
\label{app:invariants}

In this section, we derive explicit separating invariants for Equivariant Neural Fields (ENFs) by instantiating the theoretical framework developed in Section~\ref{sec:theory}. We focus on the Equivariant Neural Eikonal Solver introduced by \cite{garcia-castellanos_equivariant_2025}, extending its applicability beyond the original setting to arbitrary homogeneous conditioning spaces.

\subsection{Theoretical Setup}

We begin by describing the full structure of the latent space in ENFs and clarifying that it is equivalent to the simplified framework presented in Section~\ref{sec:usecase}.

\paragraph{Latent Space Decomposition.}
Following \cite{wessels_grounding_2024}, an ENF represents a family of signals via a coordinate-based network $f_\theta : \mathcal{X} \times \mathcal{Z} \to \mathbb{R}^{d_{\mathrm{out}}}$, where $\mathcal{X}$ denotes the input space (e.g., spatial coordinates) and $\mathcal{Z}$ is the \emph{latent space} encoding the conditioning variable. In the main text, we use $\mathcal{Z}$ to denote this latent space for notational simplicity. However, in practice, $\mathcal{Z}$ admits a natural decomposition:
\begin{equation}
    \mathcal{Z} := \mathcal{P} \times \mathcal{C},
\end{equation}
where:
\begin{itemize}
    \item $\mathcal{P}$ is the \emph{pose space}, carrying a nontrivial $G$-action. Elements $p \in \mathcal{P}$ are meant to encode geometric information (e.g., reference frames, positions, or orientations) that transform under the group.
    \item $\mathcal{C} = \mathbb{R}^{d}$ is the \emph{context space}, consisting of context vectors $\mathbf{c} \in \mathbb{R}^d$ that are meant to encode non-geometric attributes (e.g., learned features). The group $G$ acts \emph{trivially} on $\mathcal{C}$: for all $g \in G$ and $\mathbf{c} \in \mathcal{C}$, we have $g \cdot \mathbf{c} = \mathbf{c}$.
\end{itemize}
Under this decomposition, the diagonal $G$-action on $\mathcal{Z} = \mathcal{P} \times \mathcal{C}$ takes the form $g \cdot (p, \mathbf{c}) = (g \cdot p, \mathbf{c})$.

\paragraph{Reduction to Pose-Space Invariants.}
The trivial action on $\mathcal{C}$ has an important consequence: when constructing $G$-invariant functions on $\mathcal{X}^{\times m} \times \mathcal{Z}$, the appearance component $\mathbf{c}$ may be treated as a fixed parameter. Formally, a function $f_G : \mathcal{X}^{\times m} \times \mathcal{P} \times \mathcal{C} \to Y$ is $G$-invariant if and only if, for each fixed $\mathbf{c} \in \mathcal{C}$, the restricted function $f_G(\cdot, \cdot, \mathbf{c}) : \mathcal{X}^{\times m} \times \mathcal{P} \to Y$ is $G$-invariant. This allows us to identify the problem of computing $G$-invariants on $\mathcal{X}^{\times m} \times \mathcal{Z}$ with that of computing $G$-invariants on $\mathcal{X}^{\times m} \times \mathcal{P}$. Therefore, the appearance vector $\mathbf{c}$ can then be incorporated as an additional input to the network without affecting the symmetry structure.

\paragraph{Application of Theorem~\ref{thm:redIso}.}
With this simplification, we apply Generalized Reduction to the Isotropy. When the pose space is a homogeneous space $\mathcal{P} = G/H$ for some subgroup $H \leq G$, Theorem~\ref{thm:redIso} establishes:
\begin{equation}
    \text{$G$-invariants on } \mathcal{X}^{\times m} \times G/H \quad\longleftrightarrow\quad \text{$H$-invariants on } \mathcal{X}^{\times m}.
    \label{eq:reduction-correspondence}
\end{equation}
This correspondence is computationally significant: while $G$-invariants on the heterogeneous product $\mathcal{X}^{\times m} \times G/H$ may be difficult to characterize directly, $H$-invariants on $\mathcal{X}^{\times m}$ often fall within the scope of classical results from invariant theory. In particular, when $\mathcal{X} = \mathbb{R}^n$ and $H$ is a compact linear group, Weyl's First Fundamental Theorem provides a complete characterization of generating invariants.

\subsection{Computation Strategy}

We now provide a detailed exposition of the procedure outlined in Algorithm~\ref{alg:separating-invariants}, which we apply uniformly throughout this appendix.

\begin{enumerate}
    \item \textbf{Specify the pose space.} Given a group $G$ acting on $\mathcal{X}$, choose the pose space $\mathcal{P} = G/H$ by selecting an appropriate subgroup $H \leq G$. Different choices yield latent representations with varying geometric content, e.g.:
    \begin{itemize}
        \item $\mathcal{P} = G$ (i.e., $H = \{e\}$): full group as pose space.
        \item $\mathcal{P} = \mathbb{R}^n \cong \mathrm{E}(n)/\mathrm{O}(n)$: position-only encoding.
        \item $\mathcal{P} = \mathbb{R}^n \times S^{n-1} \cong \mathrm{E}(n)/\mathrm{O}(n-1)$: position-orientation encoding.
    \end{itemize}
    
    \item \textbf{Construct the canonicalization map.} Per Theorem~\ref{thm:redIso}, we require $\rho : \mathcal{P} \to G$ satisfying $\rho(p) \cdot p = p_0$ for a fixed reference point $p_0 \in \mathcal{P}$. When $\mathcal{P} = G/H$, the natural choice is $\rho(gH) = g^{-1}$, for any representative $g$ of $gH$.
    
    \item \textbf{Compute $H$-invariants on $\mathcal{X}^{\times m}$.} By the correspondence~\eqref{eq:reduction-correspondence}, we reduce to computing $H\text{-invariants}$. For $\mathcal{X} = \mathbb{R}^n$, we appeal to Theorems~\ref{thm:FMT-O} and~\ref{thm:FMT-SO}. For embedded submanifolds $\mathcal{X} \subset \mathbb{R}^n$ for which the group action extends to an action on $\mathbb{R}^n$ (e.g., $S^{n-1}$ with $\mathrm{SO}(n)$), invariants are restrictions of the corresponding ambient-space invariants.

    \item \textbf{Lift to $G$-invariants.} The $G$-invariants on $\mathcal{X}^{\times m} \times \mathcal{P}$ are obtained by precomposing the $H$-invariants with the map $T(x_1, \ldots, x_m, p) = (\rho(p) \cdot x_1, \ldots, \rho(p) \cdot x_m)$.
\end{enumerate}

\subsection{First Fundamental Theorems}

We recall two classical results that underpin our constructions.

\begin{theorem}[First Fundamental Theorem of $\mathrm{O}(d)$; see \citep{weyl_classical_1946, dym_low_2023}]
\label{thm:FMT-O}
    Let $n\geq d$. The set of separating invariants of $(\mathbb{R}^d)^{\times n}$ with respect to $\mathrm{O}(d)$ is given by
    \[
     \left\{ \norm{x_i}^2_2\right\}_{i=1,..., n}\quad \bigcup \quad \left\{ \norm{x_i-x_j}^2_2\right\}_{1\leq i< j\leq n}. 
    \]
\end{theorem}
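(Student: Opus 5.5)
The plan has two parts: first show that every listed function is $\mathrm{O}(d)$-invariant, then show that together they separate orbits in the sense of Definition~\ref{def:orbit_sep}. Invariance is immediate. For $R \in \mathrm{O}(d)$ we have $\norm{Rx_i}_2^2 = x_i^\top R^\top R x_i = \norm{x_i}_2^2$. Likewise $\norm{Rx_i - Rx_j}_2^2 = \norm{R(x_i-x_j)}_2^2 = \norm{x_i-x_j}_2^2$.

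For separation, I first reduce to the Gram matrix. The polarization identity
\[
\langle x_i, x_j\rangle = \tfrac12\bigl(\norm{x_i}_2^2 + \norm{x_j}_2^2 - \norm{x_i-x_j}_2^2\bigr)
\]
shows that the listed invariants determine every entry of the Gram matrix $\mathcal{G} = X^\top X$, where $X = [x_1,\dots,x_n] \in \RR^{d\times n}$. Now suppose $X$ and $Y = [y_1,\dots,y_n]$ take the same values on all the invariants. Then $X^\top X = Y^\top Y$, and it remains to produce $R \in \mathrm{O}(d)$ with $Y = RX$.

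To build $R$, set $V = \operatorname{span}\{x_i\}$ and $W = \operatorname{span}\{y_i\}$, and define $R_0 : V \to W$ by
\[
R_0\Bigl(\textstyle\sum_i a_i x_i\Bigr) = \textstyle\sum_i a_i y_i .
\]
The step needing care is that $R_0$ is well defined. The key identity is
\[
\norm{\textstyle\sum_i a_i x_i}_2^2 = a^\top \mathcal{G} a = \norm{\textstyle\sum_i a_i y_i}_2^2 .
\]
Applied to a difference of two representations of the same vector, it shows that $\sum_i a_i x_i = 0$ forces $\sum_i a_i y_i = 0$. Hence $R_0$ is well defined and linear, and the same identity makes it norm-preserving. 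By polarization $R_0$ then preserves inner products, so it is an injective isometry. It is also surjective onto $W$ by construction, so $\dim V = \dim W = \operatorname{rank}\mathcal{G} =: r$.

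Finally, $V^\perp$ and $W^\perp$ both have dimension $d - r$. I choose orthonormal bases of the two complements and let $R_1 : V^\perp \to W^\perp$ send one basis to the other. Then $R = R_0 \oplus R_1$ is an orthogonal map of $\RR^d$ with $Rx_i = y_i$ for every $i$, so $Y$ lies in the $\mathrm{O}(d)$-orbit of $X$.

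The main obstacle is this well-definedness and extension step, because the $x_i$ may be linearly dependent (always the case when $n > d$). The construction above handles it through the kernel argument and never requires the vectors to be independent. The hypothesis $n \ge d$ plays no essential role in the argument.
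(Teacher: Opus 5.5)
Your argument is correct and complete. It is not the paper's route, because the paper does not prove this statement: it imports it from Weyl and from Dym and Gortler.

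The argument behind that citation, as Remark~\ref{rem:gen_vs_indep} indicates, is algebraic and has two steps. First, Weyl's First Fundamental Theorem says that the inner products $\langle x_i, x_j\rangle$ generate the ring of $\mathrm{O}(d)$-invariant polynomials on $(\RR^d)^{\times n}$; by the same polarization you use, so do the listed squared norms and squared distances. Second, a separate general fact says that the invariant polynomials of a compact group separate its orbits, proved via Stone--Weierstrass approximation followed by Haar averaging. This upgrades generation to separation.

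You bypass both ingredients. You show directly that equal Gram matrices force $Y = RX$ for an explicit $R \in \mathrm{O}(d)$: the well-defined isometry $\sum_i a_i x_i \mapsto \sum_i a_i y_i$ between the spans, extended by any isometry between the orthogonal complements. Your route is elementary and self-contained and needs no compactness or approximation argument. It handles linearly dependent configurations without case distinctions, and it correctly shows that the hypothesis $n \ge d$ is superfluous for orbit separation.

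What the cited route buys is a strictly stronger algebraic conclusion: every polynomial invariant is a polynomial in the listed functions. That is more than orbit separation requires. It is also the sense in which the paper's Remark calls these invariants generators before appealing to compactness to deduce separation.
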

\vspace{3pt}
\begin{theorem}[First Fundamental Theorem of $\mathrm{SO}(d)$; see \citep{weyl_classical_1946, dym_low_2023}]
\label{thm:FMT-SO}
    Let $n\geq d$. The set of separating invariants of $(\mathbb{R}^d)^{\times n}$ with respect to $\mathrm{SO}(d)$ is given by
    \[
     \left\{ \norm{x_i}^2_2\right\}_{i=1,..., n}\quad \bigcup \quad \left\{ \norm{x_i-x_j}^2_2\right\}_{1\leq i< j\leq n} \quad \bigcup \quad \left\{ \det(x_{i_1}, x_{i_2}, ..., x_{i_d})\right\}_{1\leq i_1< i_2 < \cdots < i_d\leq n}. 
    \]
\end{theorem}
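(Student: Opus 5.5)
The plan is to check invariance directly and then prove orbit separation by reducing to the $\mathrm{O}(d)$ case of Theorem~\ref{thm:FMT-O}. The only extra work is a sign correction, handled by a case split on the rank of the configuration. Throughout I write a configuration as the $d\times n$ matrix $A=(x_1,\dots,x_n)$.

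\textbf{Invariance.} For $R\in\mathrm{SO}(d)$, orthogonality gives $\norm{Rx_i}_2=\norm{x_i}_2$ and $\norm{Rx_i-Rx_j}_2=\norm{x_i-x_j}_2$. Multiplicativity of the determinant gives $\det(Rx_{i_1},\dots,Rx_{i_d})=\det(R)\det(x_{i_1},\dots,x_{i_d})=\det(x_{i_1},\dots,x_{i_d})$, since $\det R=1$.

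\textbf{Reduction to $\mathrm{O}(d)$.} Let $A=(x_1,\dots,x_n)$ and $B=(y_1,\dots,y_n)$ take the same values on all listed invariants. Dropping the determinants, the two configurations agree on the invariants of Theorem~\ref{thm:FMT-O}. Hence there is $Q\in\mathrm{O}(d)$ with $B=QA$. (If I want this self-contained: polarization $\langle x_i,x_j\rangle=\tfrac12(\norm{x_i}^2+\norm{x_j}^2-\norm{x_i-x_j}^2)$ gives $A^\top A=B^\top B$. The map $x_i\mapsto y_i$ then extends to a linear isometry between the spans, and any isometry between their orthogonal complements completes it to $Q$.) If $\det Q=1$ we are done.

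\textbf{Sign correction when $\det Q=-1$.} For every index set $i_1<\dots<i_d$,
\[
\det(y_{i_1},\dots,y_{i_d})=\det(Q)\det(x_{i_1},\dots,x_{i_d})=-\det(x_{i_1},\dots,x_{i_d}).
\]
By hypothesis the left side also equals $\det(x_{i_1},\dots,x_{i_d})$, so every $d\times d$ minor of $A$ vanishes. Thus $\operatorname{rank}A<d$, i.e.\ $V=\operatorname{span}\{x_i\}$ is a proper subspace. Pick a unit vector $u\perp V$ and let $S=I-2uu^\top$ be the reflection through $u^\perp$. Then $S\in\mathrm{O}(d)$, $\det S=-1$, and $SA=A$ because $S$ fixes $V\subseteq u^\perp$ pointwise. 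Setting $R:=QS$ gives $\det R=1$ and $RA=QSA=QA=B$, so $A$ and $B$ lie in the same $\mathrm{SO}(d)$-orbit. The hypothesis $n\ge d$ only ensures the determinant invariants are defined; for $n<d$ the rank is automatically deficient and the $\mathrm{O}(d)$ invariants alone separate.

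\textbf{Main obstacle.} The only delicate step is the $\det Q=-1$ case. The point is that the determinant invariants carry no information exactly when $A$ is rank-deficient, and in that case the configuration has a reflection symmetry that absorbs the orientation. The full-rank subcase is then ruled out automatically, because matching nonzero minors force $\det Q=1$.
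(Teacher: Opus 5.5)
Your proof is correct. The paper does not prove this statement: it quotes it from Weyl's First Fundamental Theorem, and orbit separation is then obtained, as in Remark~\ref{rem:gen_vs_indep}, from the general fact that for a compact group the generators of the ring of polynomial invariants separate orbits. Your route is different and elementary.

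\begin{itemize}
\item Polarization turns the norm and distance invariants into equal Gram matrices, $A^\top A=B^\top B$.
\item Equal Gram matrices give some $Q\in\mathrm{O}(d)$ with $B=QA$.
\item The determinant invariants are used only once. If $\det Q=-1$, every $d\times d$ minor of $A$ must vanish, so $\operatorname{rank}A<d$. A reflection $S$ fixing $\operatorname{span}\{x_i\}$ pointwise then gives $R=QS\in\mathrm{SO}(d)$ with $RA=B$.
\end{itemize}

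What your approach buys is a self-contained proof of exactly the property the paper relies on, namely orbit separation as used in Proposition~\ref{prop:max_express}. It needs neither the invariant-theoretic machinery nor the compactness and averaging argument behind ``generators separate orbits.'' It also shows clearly that the determinants carry information only for full-rank configurations, and it covers $n<d$ for free.

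What the cited route buys is a stronger algebraic conclusion: these functions generate all polynomial $\mathrm{SO}(d)$-invariants. That conclusion goes beyond what the theorem, as phrased in the paper, asserts or uses.
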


\vspace{3pt}
\begin{remark}[Generators vs.\ Functionally Independent Invariants]
\label{rem:gen_vs_indep}
Weyl's First Fundamental Theorems provide \emph{generators} of the invariant ring, meaning every polynomial invariant can be expressed as a polynomial in these generators. However, generators need not be functionally independent: they may satisfy algebraic relations (syzygies). Two distinct notions are relevant:

\begin{itemize}
    \item \textbf{Orbit separation} is required for \emph{maximal expressivity}: by Proposition~\ref{prop:max_express}, any continuous invariant function can be written as a composition of orbit-separating invariants with a continuous function. For compact groups, generators are orbit-separating~\citep{dym_low_2023}, so our constructions guarantee universal approximation when composed even with a simple MLP.
    
    \item \textbf{Functional independence} is required for \emph{minimal representation}: the minimum number of functionally independent invariants equals the dimension of the orbit space, $m - s$, where $m = \dim(\mathcal{M})$ and $s = \dim(\text{orbit})$~\citep{olver_equivalence_1995}. This is relevant when minimizing the input dimension to downstream models.
\end{itemize}

\noindent Throughout this appendix, we prioritize orbit separation to ensure maximal expressivity. The invariant sets we provide are separating (and hence sufficient for universal approximation), though some may contain redundancies relative to the minimal functionally independent set.
\end{remark}

\subsection{Explicit Invariants}

We now derive separating invariants for the Equivariant Neural Eikonal Solver \citep{garcia-castellanos_equivariant_2025} across various geometric configurations. In all cases, let $s, r \in \mathcal{X}$ denote source and receiver coordinates, and let $p \in \mathcal{P}$ denote the pose. Since $G$ acts isometrically on $\mathcal{X}$, the reduction to the isotropy translates the problem of computing separating invariants on a heterogeneous product space to computing invariants on spaces where a First Fundamental Theorem is available.

\paragraph{2D Euclidean Input Space} Let $s,r \in \mathcal{X}=\mathbb{R}^2$. Then we will consider the following two isometry groups:

\begin{enumerate}[label=(\alph*)]
    \item \textbf{Euclidean group:} Let $G=E(2)$. Then we have three homogeneous spaces as candidates for our latent conditioning pose-space

    \begin{enumerate}[label=(\roman*)]
        \item \textbf{2D Euclidean Latent Space:} Let $p\in \mathcal{P}=\mathbb{R}^2\equiv E(2)/O(2)$. By Weyl's First Main Theorem for the Euclidean group \citep{weyl_classical_1946, olver_joint_2001}, we have that a set of separating invariants of $\mathbb{R}^2 \times \mathbb{R}^2\times \mathbb{R}^2$ with regards of $E(2)$ is
        \[\boxed{
         \left\{ \norm{s-r}^2_2, \norm{s-p}^2_2, \norm{r-p}^2_2\right\}}
        \]

        \begin{remark}
              In this case, we do not need to apply the reduction to the isotropy. However, as a verification step, we can confirm our result using \textit{reduction to the isotropy} (Theorem~\ref{thm:redIso}), which establishes a one-to-one correspondence between $E(2)$-invariants on $\mathbb{R}^2\times \mathbb{R}^2\times E(2)/O(2)$ and $O(2)$-invariants on $\mathbb{R}^2\times \mathbb{R}^2$. 
        
        Indeed, when we take the set of $O(2)$-invariants on $\mathbb{R}^2\times \mathbb{R}^2$ presented in Theorem~\ref{thm:FMT-O} and precompose it with the canonicalization obtained by subtracting $p$ from both $s$ and $r$, we recover precisely the same set of invariants given by Weyl's First Main Theorem for the Euclidean group.
        \end{remark}
      
        \vspace{5pt}

        \item \textbf{2D Position-Orientation Latent Space:} Let $p\in \mathcal{P}=\mathbb{R}^2\times S^1\equiv E(2)/O(1)$. Then by \textit{reduction to the isotropy} (Theorem~\ref{thm:redIso}), $E(2)$-invariants on $\mathbb{R}^2\times \mathbb{R}^2\times E(2)/O(1)$ are in one-to-one correspondence with $O(1)$-invariants on $\mathbb{R}^2\times \mathbb{R}^2$. Since $O(1)$ as the stabilizer subgroup of $((1,0), (1,0))\in \mathbb{R}^2 \times S^1$ can be described as

        \[
        O(1) \cong  \left\{ \begin{pmatrix}
1 &  0 \\
 0&  1
\end{pmatrix},  \begin{pmatrix}
1 &  0 \\
 0&  -1
\end{pmatrix}\right\}.
        \]

    Hence, the set of separating invariants of $\mathbb{R}^2\times \mathbb{R}^2$ with regards $O(1)$ will be 
    \[
     \left\{ s_{(1)}, r_{(1)}\right\} \cup (\text{set of separating invariants of } \mathbb{R}\times \mathbb{R} \text{ with regards to } O(1)),
    \]
     where $s_{(1)}, r_{(1)}$ are the first coordinate of $s$ and $r$ respectively. By Theorem~\ref{thm:FMT-O} we know what are the set of separating invariants of $\mathbb{R}\times \mathbb{R}$ with regards $O(1)$. Meaning, that  the set of separating invariants of $\mathbb{R}^2\times \mathbb{R}^2$ with regards $O(1)$ will be 
     \[
      \boxed{\left\{ s_{(1)}, r_{(1)}, \abs{s_{(2)}}^2, \abs{r_{(2)}}^2, \abs{s_{(2)}-r_{(2)}}^2\right\} =  \left\{f_{O(1)}^{\ell}(s,r)\right\}_{\ell=1, ...,5}}
     \]

    Let $p=(t, \alpha) \in \mathbb{R}^2\times S^1$, as stated in Remark~\ref{remk:hom}, we can identify it with a coset 

\[ \hspace*{-\leftmargin} p\leftrightarrow \Bar{p}O(1)=
     \left(\begin{array}{ccc}
    \multicolumn{2}{c}{\multirow{2}{*}{{\scalebox{1.5}{$A$}}}}  &   t_{(1)} \\ 
       & & t_{(2)}\\
    0 & 0 & 1
  \end{array}\right) O(1) \in E(2)/O(1), \ \text{with } A= \begin{pmatrix}
\alpha_{(1)} &  -\alpha_{(2)} \\
 \alpha_{(2)}&  \alpha_{(1)}
\end{pmatrix}\in O(2). 
\]

Then we will construct the canonalization function $\rho: \mathbb{R}^2\times S^1 \to E(2)$ satisfying the identity of Equation~\eqref{eq:fakemov}, as
\[
\rho(p)= \widebar{p}\moveup{9}^{-1}=  \left(\begin{array}{ccc}
    \multicolumn{2}{c}{\multirow{2}{*}{{\scalebox{1.5}{$A^T$}}}}  &   \multicolumn{1}{c}{\multirow{2}{*}{{\scalebox{1}{$-A^T t$}}}} \\ 
       & & \\
    0 & 0 & 1
  \end{array}\right)\in E(2).
\]

Then, by Theorem~\ref{thm:redIso}, the set of separating invariants of $\mathbb{R}^2 \times \mathbb{R}^2 \times (\mathbb{R}^2 \times S^1)$ with regards to $E(2)$ is 
\[
\left\{f_{O(1)}^{\ell}\left(A^Ts-A^Tt,A^Tr-A^Tt\right)\right\}_{\ell =1, ...,5}\,.
\]

        \item \textbf{Group Latent Space:} Let $p \in \mathcal{P}= E(2)$. By Corollary~\ref{col:group_lat}, the set of separating invariants of $\mathbb{R}^2\times \mathbb{R}^2 \times E(2)$ is
\[
\boxed{\left\{p^{-1}\cdot s, p^{-1}\cdot r\right\}}
\]
    \end{enumerate}
 \item \textbf{Special Euclidean group:} Let $G=SE(2)$. Then we have two homogeneous spaces as candidates for our latent conditioning pose-space
 
    \begin{enumerate}[label=(\roman*)]
        \item \textbf{2D Euclidean Latent Space:} Let $p\in \mathcal{P}=\mathbb{R}^2\equiv SE(2)/SO(2)$. By Weyl's First Main Theorem for the Special Euclidean group \citep{weyl_classical_1946, olver_joint_2001}, we have that a set of separating invariants of $\mathbb{R}^2 \times \mathbb{R}^2\times \mathbb{R}^2$ with regards of $SE(2)$ is
        \[
         \boxed{\left\{ \norm{s-r}^2_2, \norm{s-p}^2_2, \norm{r-p}^2_2, \det(s-p, r-p)\right\}}
        \]

        \begin{remark}
          Again, in this case, we do not need to apply the reduction to the isotropy. However, as a verification step, we can confirm our result using \textit{reduction to the isotropy} (Theorem~\ref{thm:redIso}), which establishes a one-to-one correspondence between $SE(2)$-invariants on $\mathbb{R}^2\times \mathbb{R}^2\times SE(2)/SO(2)$ and $SO(2)$-invariants on $\mathbb{R}^2\times \mathbb{R}^2$. 
        
        Indeed, when we take the set of $SO(2)$-invariants on $\mathbb{R}^2\times \mathbb{R}^2$ presented in Theorem~\ref{thm:FMT-SO} and precompose it with the canonicalization obtained by subtracting $p$ from both $s$ and $r$, we recover precisely the same set of invariants given by Weyl's First Main Theorem for the Special Euclidean group.
        \end{remark}
        \vspace{5pt}

         \item \textbf{Group Latent Space:} Let $p \in \mathcal{P}= \mathbb{R}^2\times S^1 \equiv SE(2)/SO(1) \equiv SE(2) / \{I\} \equiv SE(2)$. By Corollary~\ref{col:group_lat}, the set of separating invariants of $\mathbb{R}^2\times \mathbb{R}^2 \times E(2)$ is
        \[
        \boxed{\left\{p^{-1}\cdot s, p^{-1}\cdot r\right\}}
        \]
        \end{enumerate}
\end{enumerate}

\paragraph{3D Euclidean Input Space} Let $s,r \in \mathcal{X}=\mathbb{R}^3$. Then we will consider the following two isometry groups:

\begin{enumerate}[label=(\alph*)]
    \item \textbf{Euclidean group:} Let $G=E(3)$. Then we have four homogeneous spaces as candidates for our latent conditioning pose-space

    \begin{enumerate}[label=(\roman*)]
       \item \textbf{3D Euclidean Latent Space:} Let $p \in \mathcal{P} = \mathbb{R}^3 \cong E(3)/O(3)$. Since any three points in $\mathbb{R}^3$ lie in a common plane, a triangle in $\mathbb{R}^3$ is uniquely determined up to $E(3)$ by its three edge lengths. By Weyl's First Main Theorem for the Euclidean group \citep{weyl_classical_1946, olver_joint_2001}, a set of separating invariants of $\mathbb{R}^3 \times \mathbb{R}^3 \times \mathbb{R}^3$ with respect to $E(3)$ is
\[
\boxed{\left\{ \|s - r\|_2^2,\, \|s - p\|_2^2,\, \|r - p\|_2^2 \right\}.}
\]

        \item \textbf{3D Position-Orientation Latent Space:} Let $p\in \mathcal{P}=\mathbb{R}^3\times S^2\equiv E(3)/O(2)$. Then by \textit{reduction to the isotropy} (Theorem~\ref{thm:redIso}), $E(3)$-invariants on $\mathbb{R}^3\times \mathbb{R}^3\times E(3)/O(2)$ are in one-to-one correspondence with $O(2)$-invariants on $\mathbb{R}^3\times \mathbb{R}^3$. Since $O(2)$ as the stabilizer subgroup of $((1,0,0), (1,0,0))\in \mathbb{R}^3 \times S^2$ can be described as

         \[
        O(2) \cong  \left\{\left(\begin{array}{ccc}
    1 & 0 & 0 \\ 
    0  & \multicolumn{2}{c}{\multirow{2}{*}{{\scalebox{1.5}{$R$}}}} \\
    0 & & 
  \end{array}\right)\ : \ R \in O(2)\right\} \quad  \left(\text{it will fix } s_{(1)} \text{ and act on } (s_{(2)}, s_{(3)}) \right)
        \]

    Hence, the set of separating invariants of $\mathbb{R}^3\times \mathbb{R}^3$ with regards $O(2)$ will be 
    \[
     \left\{ s_{(1)}, r_{(1)}\right\} \cup (\text{set of separating invariants of } \mathbb{R}^2\times \mathbb{R}^2 \text{ with regards to } O(2)),
    \]
     where $s_{(1)}, r_{(1)}$ are the first coordinate of $s$ and $r$ respectively. By Theorem~\ref{thm:FMT-O} we know what are the set of separating invariants of $\mathbb{R}^2\times \mathbb{R}^2$ with regards $O(2)$. Meaning, that the set of separating invariants of $\mathbb{R}^3\times \mathbb{R}^3$ with regards $O(2)$ will be
     \[ \hspace*{-\leftmargin} \hspace*{-\leftmargin} \boxed{\Big\{ s_{(1)}, r_{(1)}, \norm{(s_{(2)}, s_{(3)})}_2^2, \norm{(r_{(2)}, r_{(3)})}_2^2,
        \norm{(s_{(2)}-r_{(2)}, s_{(3)}-r_{(3)})}_2^2\Big\} =  \left\{f_{O(2)}^{\ell}(s,r)\right\}_{\ell =1, ...,5}}\]
        
 Let $p=(t, \alpha) \in \mathbb{R}^3\times S^2$, as stated in Remark~\ref{remk:hom}, we can identify it with a coset 

\[  p\leftrightarrow \Bar{p}O(2)=
     \left(\begin{array}{cccc}
    \multicolumn{3}{c}{\multirow{3}{*}{{\scalebox{2}{$A$}}}}  &   \vert \\ 
       & & &t\\
       & & & \vert \\
    0 & 0 &  0 & 1
  \end{array}\right) O(2) \in E(3)/O(2),
\]

\[
\text{with } A= \begin{bmatrix}
    \vert & \vert & \vert\\
    \alpha & \beta & \alpha \times \beta\\
    \vert & \vert & \vert
\end{bmatrix}\in O(3), \text{ s.t. } \beta\perp  \alpha \text{ and } \norm{\beta}_2=1.
\]

Then we will construct the canonalization function $\rho: \mathbb{R}^3\times S^2 \to E(3)$ satisfying the identity of Equation~\eqref{eq:fakemov}, as
\begin{equation}
\label{eq:template_rho}
  \rho(p)= \widebar{p}\moveup{9}^{-1}=  \left(\begin{array}{cccc}
    \multicolumn{3}{c}{\multirow{3}{*}{{\scalebox{2}{$A^T$}}}}  &   \vert \\ 
       & & & -A^Tt\\
       & & & \vert \\
    0 & 0 &  0 & 1
  \end{array}\right)\in E(3).  
\end{equation}

Then, by Theorem~\ref{thm:redIso}, the set of separating invariants of $\mathbb{R}^3 \times \mathbb{R}^3 \times (\mathbb{R}^3 \times S^2)$ with regards to $E(3)$ is 
\[
\left\{f_{O(2)}^{\ell}\left(A^Ts-A^Tt,A^Tr-A^Tt\right)\right\}_{\ell =1, ...,5}\,.
\]
        \vspace{5pt}

    \item \textbf{Affine Stiefel Manifold Latent Space:} Let $p \in \text{Vaff}_{2,3}:=\mathbb{R}^3\times V_{2,3}= \{(t, F)\in \mathbb{R}^3\times \mathbb{R}^{3\times 2} \mid F^T F = I_2\in \mathbb{R}^{2\times 2}\}$, where $V_{2,3}$ is the Stiefel manifold of $2$-frames in $\mathbb{R}^3$. As shown in \cite{lim_grassmannian_2021}, this is a homogeneous space, i.e., $\text{Vaff}_{2,3}\equiv E(3)/O(1)$.
            \vspace{2pt}
    \begin{remark}
The affine Stiefel manifold $\text{Vaff}_{2,3} = \mathbb{R}^3 \times V_{2,3}$ provides a rich geometric interpretation of the learned latent pose $p$.

For any 2-frame $F = [\alpha \mid \beta] \in V_{2,3}$, the orthonormality conditions imply that $\alpha$ defines a point on $S^2$ while $\beta$, being perpendicular to $\alpha$, defines a unit tangent vector at that point. This identifies $V_{2,3}$ with the unit tangent bundle $UTS^2$.

Extending to $\text{Vaff}_{2,3}$, any element $(t, F)$ encodes:
\begin{itemize}
    \item $t \in \mathbb{R}^3$: position
    \item $\alpha \in S^2$: orientation
    \item $\beta \in T_\alpha S^2$: instantaneous direction of orientation change
\end{itemize}

This structure naturally captures a hierarchy of motion information, not only where an object is and how it is oriented, but also the direction in which its orientation is evolving.
\end{remark}
        \vspace{5pt}

    By \textit{reduction to the isotropy} (Theorem~\ref{thm:redIso}), $E(3)$-invariants on $\mathbb{R}^3\times \mathbb{R}^3\times E(3)/O(1)$ are in one-to-one correspondence with $O(1)$-invariants on $\mathbb{R}^3\times \mathbb{R}^3$. Since $O(1)$ as the stabilizer subgroup of $((1,0,0), [(1,0,0)^T \mid (0,1,0)^T])\in \text{Vaff}_{2,3}$ can be described as
        \[
        O(1) \cong  \left\{ \begin{pmatrix}
1 &  0 &0  \\
0 &  1 &0 \\
0 &  0 &1 \\
\end{pmatrix}, \begin{pmatrix}
1 &  0 &0  \\
0 &  1 &0 \\
0 &  0 &-1 \\
\end{pmatrix}\right\}. 
        \]

    Hence, the set of separating invariants of $\mathbb{R}^3\times \mathbb{R}^3$ with regards $O(1)$ will be 
    \[
     \left\{ s_{(1)}, r_{(1)},  s_{(2)}, r_{(2)}\right\} \cup (\text{set of separating invariants of } \mathbb{R}\times \mathbb{R} \text{ with regards to } O(1))
    \]
     Thus, by Theorem~\ref{thm:FMT-O}, the set of separating invariants of  $\mathbb{R}^3\times \mathbb{R}^3$ with regards $O(1)$ will be
     \[ \boxed{\Big\{s_{(1)}, r_{(1)},  s_{(2)}, r_{(2)}, \abs{s_{(3)}}^2, \abs{r_{(3)}}^2, \abs{s_{(3)}-r_{(3)}}^2\Big\} =  \left\{f_{O(1)}^{\ell}(s,r)\right\}_{\ell =1, ...,7}}\]

     Let $p=(t, F) \in \text{Vaff}_{2,3}$ with $F = [\alpha \mid \beta] \in V_{2,3}$, as stated in Remark~\ref{remk:hom}, we can identify it with a coset 

\[ \hspace*{-\leftmargin}  p\leftrightarrow \Bar{p}O(1)=
     \left(\begin{array}{cccc}
    \multicolumn{3}{c}{\multirow{3}{*}{{\scalebox{2}{$A$}}}}  &   \vert \\ 
       & & &t\\
       & & & \vert \\
    0 & 0 &  0 & 1
  \end{array}\right) O(1) \in E(3)/O(1), \ \text{with } A= \begin{bmatrix}
    \vert & \vert & \vert\\
    \alpha & \beta & \alpha \times \beta\\
    \vert & \vert & \vert
\end{bmatrix}\in O(3)
\]

Then we will construct the canonalization function $\rho: \mathbb{R}^3\times S^2 \to E(3)$ satisfying the identity of Equation~\eqref{eq:fakemov}, as $\rho(p)= \widebar{p}\moveup{9}^{-1}$

Then, by Theorem~\ref{thm:redIso}, the set of separating invariants of $\mathbb{R}^3 \times \mathbb{R}^3 \times (\mathbb{R}^3 \times S^2)$ with regards to $E(3)$ is 
\[
\left\{f_{O(1)}^{\ell}\left(A^Ts-A^Tt,A^Tr-A^Tt\right)\right\}_{\ell =1, ...,5}\,.
\]
    
    \item \textbf{Group Latent Space:} Let $p \in \mathcal{P}= E(3)$. By Corollary~\ref{col:group_lat}, the set of separating invariants of $\mathbb{R}^3\times \mathbb{R}^3 \times E(3)$ is
\[
\boxed{\left\{p^{-1}\cdot s, p^{-1}\cdot r\right\}}
\]

      \end{enumerate}

       \item \textbf{Special Euclidean group:} Let $G=SE(3)$. We consider three homogeneous spaces as candidates for our latent conditioning pose-space:
       \begin{enumerate}[label=(\roman*)]
        \item \textbf{3D Euclidean Latent Space:} Let $p\in \mathcal{P}=\mathbb{R}^3\equiv SE(3)/SO(3)$. In this case, $E(3)$-invariants of $\mathbb{R}^3\times \mathbb{R}^3\times \mathbb{R}^3$ are in one-to-one correspondence with their $SE(3)$-invariants. This is formalized as follows:
        \vspace{5pt}
        \begin{lemma}
            Let $t \in (\mathbb{R}^3)^{\times 3}$ be a triplet of 3D points.
            The orbit of $t$ under $\text{SE}(3)$ is the same as its orbit under $\text{E}(3)$.
        \end{lemma}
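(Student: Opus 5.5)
Since $\mathrm{SE}(3) \subset \mathrm{E}(3)$, the inclusion $\mathrm{SE}(3)\cdot t \subseteq \mathrm{E}(3)\cdot t$ is immediate. It therefore suffices to show the reverse inclusion. Writing $t=(t_1,t_2,t_3)$, the plan is to show that every element of $\mathrm{E}(3)$ acts on $t$ exactly as some element of $\mathrm{SE}(3)$ does. Equivalently, the stabilizer of $t$ in $\mathrm{E}(3)$ should contain an orientation-reversing element $\sigma$. Granting this, take any $g=(b,R)\in \mathrm{E}(3)$. If $\det R=1$ we are done. Otherwise $g\sigma \in \mathrm{SE}(3)$, since the linear parts multiply and the determinants are both $-1$, and $(g\sigma)\cdot t = g\cdot t$. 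Hence $g\cdot t\in \mathrm{SE}(3)\cdot t$.

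To construct $\sigma$, I first translate so that $t_1$ sits at the origin; conjugation by a translation preserves both groups and orientation. The remaining points $t_2-t_1$ and $t_3-t_1$ span a subspace $V\subset\mathbb{R}^3$ of dimension at most $2$. This is the key geometric fact, and it is the same observation made above, that three points always lie in a common plane.

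Now choose a unit vector $n\perp V$, which exists because $\dim V\le 2<3$. Let $S = I - 2nn^\top$ be the reflection across $n^\perp$, and set $\sigma := (t_1 - S t_1,\, S)$, i.e.\ the map $x\mapsto S(x-t_1)+t_1$. Then $\sigma$ fixes $t_1$. It also fixes $t_2$ and $t_3$, because $t_i - t_1\in V\subseteq n^\perp$ is fixed by $S$. Finally $\det S=-1$, so $\sigma$ is orientation-reversing, as required.

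The only step needing care is this construction of $\sigma$ in degenerate configurations: coincident points, or collinear points where $\dim V\le 1$. The argument above covers these uniformly, since any $n\perp V$ works and one always exists. The rest of the proof is the short coset argument already given. As a consequence, the separating invariants for $\mathrm{SE}(3)$ on $\mathbb{R}^3\times\mathbb{R}^3\times\mathbb{R}^3$ coincide with the three squared distances $\|s-r\|_2^2$, $\|s-p\|_2^2$, $\|r-p\|_2^2$. In particular, the determinant term of Theorem~\ref{thm:FMT-SO} does not appear after reduction: the reduced configuration $(s-p,\, r-p)$ has only two vectors in $\mathbb{R}^3$, so no $3\times 3$ determinant can be formed from it.
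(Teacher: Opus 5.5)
Your proof is correct and takes essentially the same route as the paper. Both arguments exhibit a reflection in a plane containing the three points, which fixes $t$ and has determinant $-1$, and then use that $\mathrm{SE}(3)$ has index $2$ in $\mathrm{E}(3)$; your step $g\sigma\in\mathrm{SE}(3)$ is exactly the paper's decomposition $\mathrm{E}(3)=\mathrm{SE}(3)\{e,f\}$. Choosing a normal $n\perp V$ is also slightly more careful than the paper, which refers to a \emph{unique} affine plane through the points and so skips the collinear and coincident cases that your construction handles uniformly.
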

        
\begin{proof}
  \textbf{Step 1: Coplanarity and choice of reflection.}
  Any three points in $\mathbb{R}^3$ lie in a unique affine plane~$\Pi$.  Let
  \[
    f\colon\mathbb{R}^3 \longrightarrow \mathbb{R}^3
  \]
  be the reflection in the plane $\Pi$.  By construction:
  \[
    f^2 = \mathrm{id}, 
    \quad
    \det(\text{Jacobian matrix of} f) = -1,
    \quad
    f(t_i) = t_i \quad (\forall\,i=1,2,3).
  \]
  Hence $f \in E(3)\setminus SE(3)$ and $f\,t = t$.

  \smallskip
  \textbf{Step 2: Index‑2 subgroup and coset decomposition.}
  The determinant map
  \[
    \det\colon E(3)\;\longrightarrow\;\{\pm1\},
    \qquad (x\mapsto R x + b)\;\mapsto\;\det(R)
  \]
  is a surjective homomorphism whose kernel is exactly $SE(3)$.  Therefore
  \[
    [E(3) : SE(3)] = 2,
    \quad
    SE(3) \lhd E(3).
  \]
  Choosing the reflection $f\notin SE(3)$ as above, we obtain the set‐product decomposition
  \[
    E(3) \;=\; SE(3)\,\{e,f\}
    \;=\;
    \{e,f\}\,SE(3).
  \]

  \smallskip
  \textbf{Step 3: Orbits coincide.}
  Acting on $t$, we have
  \[
    E(3)\,t
    = \bigl(SE(3)\,\{e,f\}\bigr)\,t
    = SE(3)\,\bigl(\{e,f\}\,t\bigr).
  \]
  Since $f\,t = t$, the set $\{e,f\}\,t = \{t\}$.  Hence
  \[
    E(3)\,t
    = SE(3)\,\{t\}
    = SE(3)\,t.
  \]
\end{proof}
                \vspace{5pt}

        \begin{corollary}
            Let $T = (\mathbb{R}^3)^{\times 3}$ be the space of triplets of 3D points.
            The orbit spaces $T/\text{SE}(3)$ and $T/\text{E}(3)$ are identical.
        \end{corollary}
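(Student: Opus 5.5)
This follows directly from the preceding lemma, which says that for every $t \in T$ the orbits coincide: $\mathrm{SE}(3)\cdot t = \mathrm{E}(3)\cdot t$. The plan is to recall what the two orbit spaces are as sets and compare them elementwise. By the definition in Appendix~\ref{app:mathprel}, $T/\mathrm{SE}(3) = \{\mathrm{SE}(3)\cdot t \mid t \in T\}$ and $T/\mathrm{E}(3) = \{\mathrm{E}(3)\cdot t \mid t \in T\}$. These are collections of subsets of $T$, so to call them \emph{identical} means showing that they are the same collection of subsets.

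Both sets are indexed by the same parameter $t$ ranging over $T$. The lemma identifies the $t$-th member of each family, $\mathrm{SE}(3)\cdot t = \mathrm{E}(3)\cdot t$. The two indexed families therefore have the same image, which gives equality of sets. The same argument also shows that the two equivalence relations on $T$ agree: $t \sim_{\mathrm{SE}(3)} t'$ iff $t' \in \mathrm{SE}(3)\cdot t = \mathrm{E}(3)\cdot t$ iff $t \sim_{\mathrm{E}(3)} t'$. Consequently the quotient maps $\pi_{\mathrm{SE}(3)}$ and $\pi_{\mathrm{E}(3)}$ coincide as maps $T \to T/\mathrm{SE}(3) = T/\mathrm{E}(3)$.

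As a consequence worth noting for the surrounding text, Theorem~\ref{thm:quotUni} then shows that a function on $T$ is $\mathrm{SE}(3)$-invariant if and only if it is $\mathrm{E}(3)$-invariant. It also shows that a family of functions separates $\mathrm{SE}(3)$-orbits exactly when it separates $\mathrm{E}(3)$-orbits. This justifies reusing the boxed $\mathrm{E}(3)$ invariants $\{\|s-r\|^2,\|s-p\|^2,\|r-p\|^2\}$ without determinant terms.

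There is no real obstacle here; the substantive content, namely the reflection in the plane through the three points, is already carried by the preceding lemma. The only point needing care is the degenerate case of collinear or coincident points. There the plane $\Pi$ is not unique, so one should simply choose any plane containing the points. The lemma's reflection argument still applies, so no separate case analysis is needed in the corollary.
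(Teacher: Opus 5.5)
Your proposal is correct and takes the same approach as the paper, which gives this corollary no separate proof. It follows immediately from the preceding lemma ($\mathrm{SE}(3)\cdot t = \mathrm{E}(3)\cdot t$ for every $t$), and your treatment of collinear or coincident triples is a valid refinement: the paper's claim that the plane through the points is unique fails there, but a reflection in any plane containing them still fixes $t$ and lies outside $\mathrm{SE}(3)$.
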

                \vspace{5pt}

        \begin{corollary}\label{col:sameinvstriang}
            Any $\text{SE}(3)$-invariant function $f_{SE(3)} : T \to \mathbb{R}$ is also $\text{E}(3)$-invariant.
        \end{corollary}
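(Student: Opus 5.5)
The plan is to derive Corollary~\ref{col:sameinvstriang} directly from the preceding corollary, which says that the orbit spaces $T/\mathrm{SE}(3)$ and $T/\mathrm{E}(3)$ coincide. Equivalently, for every triplet $t\in T$ we have $\mathrm{E}(3)\,t = \mathrm{SE}(3)\,t$, by the lemma above. Invariance of a function means constancy on orbits, and the orbits are the same for both groups, so the statement should follow almost immediately.

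The steps, in order, are as follows. Let $f_{SE(3)}:T\to\mathbb{R}$ be $\mathrm{SE}(3)$-invariant, and fix $g\in \mathrm{E}(3)$ and $t\in T$. We want to show $f_{SE(3)}(g\cdot t)=f_{SE(3)}(t)$.
\begin{enumerate}
\item Since $g\cdot t\in \mathrm{E}(3)\,t$ and the lemma gives $\mathrm{E}(3)\,t=\mathrm{SE}(3)\,t$, there exists $h\in\mathrm{SE}(3)$ with $g\cdot t = h\cdot t$.
\item By $\mathrm{SE}(3)$-invariance, $f_{SE(3)}(g\cdot t)=f_{SE(3)}(h\cdot t)=f_{SE(3)}(t)$.
\end{enumerate}
This shows $f_{SE(3)}$ is $\mathrm{E}(3)$-invariant.

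As an alternative, one could phrase the argument through Theorem~\ref{thm:quotUni}. By that theorem, $f_{SE(3)}$ factors as $\tilde f\circ\pi_{SE(3)}$. The preceding corollary gives $\pi_{SE(3)}=\pi_{E(3)}$, because the two equivalence relations on $T$ are equal. Hence $f_{SE(3)}=\tilde f\circ\pi_{E(3)}$, and the converse direction of Theorem~\ref{thm:quotUni} then yields $\mathrm{E}(3)$-invariance.

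There is no substantive obstacle, since all the geometric content (the reflection through the plane containing the three points) is already in the lemma. The only point needing care is to use the equality of orbits pointwise, for the specific $t$ and $g$ at hand. It is not true that $g$ itself lies in $\mathrm{SE}(3)$; the argument only needs some $h\in\mathrm{SE}(3)$ with $h\cdot t=g\cdot t$, and such an $h$ may depend on $t$.
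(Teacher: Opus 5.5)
Your proposal is correct and matches the paper. The paper gives no separate proof and treats this as an immediate consequence of the preceding lemma: the $\mathrm{SE}(3)$- and $\mathrm{E}(3)$-orbits of any triplet coincide, via a reflection fixing the three points, so constancy on $\mathrm{SE}(3)$-orbits is constancy on $\mathrm{E}(3)$-orbits. Your pointwise argument, choosing $h\in\mathrm{SE}(3)$ with $h\cdot t=g\cdot t$ (with $h$ allowed to depend on $t$), is exactly the right way to make that precise.
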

                \vspace{5pt}

      By Corollary~\ref{col:sameinvstriang}, $SE(3)$-orbits and $E(3)$-orbits coincide for three points in $\mathbb{R}^3$. Therefore, the set of separating invariants of $\mathbb{R}^3\times \mathbb{R}^3\times \mathbb{R}^3$ with respect to $SE(3)$ is
\[
 \boxed{\left\{  \norm{s-r}^2_2, \norm{s-p}^2_2, \norm{r-p}^2_2\right\}}
\]
                \vspace{5pt}

        This result can be generalized using the following lemma:
                \vspace{5pt}

        \begin{lemma}
            Let $G$ be a group acting on a set $X$. 
            Let $x \in X$ and define $H = \mathrm{Stab}_G(x)$.
            Suppose we have the decomposition $G = G' H$ for some subgroup $G'$.
            Then $G x = G' x$.
        \end{lemma}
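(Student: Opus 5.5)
The plan is to prove the two inclusions $G'x \subseteq Gx$ and $Gx \subseteq G'x$ directly from the definitions. This mirrors Step~3 of the preceding proof for $\mathrm{SE}(3)\subset \mathrm{E}(3)$: there the role of $H$ was played by $\{e,f\}\subseteq \mathrm{Stab}_{E(3)}(t)$, and the role of $G'$ by $\mathrm{SE}(3)$.

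The inclusion $G'x \subseteq Gx$ is immediate, since $G' \subseteq G$ as a subgroup, so every element $g'\cdot x$ with $g'\in G'$ is already of the form $g\cdot x$ with $g \in G$.

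For the reverse inclusion, I will take an arbitrary element $g\cdot x \in Gx$. By the assumed decomposition $G = G'H$ (meant as a set product $\{g'h \mid g'\in G',\, h\in H\}$), I can write $g = g'h$ with $g'\in G'$ and $h \in H = \mathrm{Stab}_G(x)$. Using the compatibility axiom of the group action from Appendix~\ref{app:mathprel}, I then get
\[
g\cdot x = (g'h)\cdot x = g'\cdot (h\cdot x) = g'\cdot x \in G'x.
\]
Equivalently, at the level of sets, $Gx = (G'H)x = G'(Hx) = G'\{x\} = G'x$, exactly as in the displayed computation of Step~3 above.

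There is no real obstacle here. The only point needing care is the direction of the product: the decomposition must be $G = G'H$, with $H$ on the right, so that the stabilizer element acts on $x$ first and is absorbed. If the hypothesis were instead $G = HG'$, I would use $Gx = (Gx)^{\text{sets}}$ via inverses: $G = G^{-1} = G'^{-1}H^{-1} = G'H$, so the two forms are equivalent and either can be used. Finally, I would remark that the earlier lemma is the special case $G = \mathrm{E}(3)$, $G' = \mathrm{SE}(3)$, $x = t$. There $\{e,f\}\subseteq H$ gives $G = G'\{e,f\} \subseteq G'H \subseteq G$, so $G = G'H$ holds.
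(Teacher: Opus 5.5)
Your proof is correct and is essentially the paper's own argument, which is the one-line set computation $Gx = G'Hx = G'x$; you simply write it out elementwise using the compatibility axiom of the action. Your side remark that $G = HG'$ is equivalent to $G = G'H$ (by taking inverses, since $G'$ and $H$ are subgroups) is also correct, though the notation in the phrase introducing it is garbled and can be dropped.
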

        \begin{proof}
            $G x = G' H x = G' x$.
        \end{proof}
                \vspace{5pt}

        \begin{corollary}\label{col:genequivsimplex}
            Let $K_n = (\mathbb{R}^n)^{\times n}$ be the space of $n$-tuples of $\mathbb{R}^n$ points.
            The orbit spaces $K_n/\text{SE}(n)$ and $K_n/\text{E}(n)$ are identical. Therefore, any $\text{SE}(n)$-invariant function $f_{SE(n)} : K_n \to \mathbb{R}$ is also $\text{E}(n)$-invariant.
        \end{corollary}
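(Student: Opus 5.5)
The plan is to follow the $n=3$ argument, now using the general lemma just stated. Fix $t=(t_1,\dots,t_n)\in K_n$. Since $SE(n)\subset E(n)$, we always have $SE(n)\,t\subseteq E(n)\,t$. So it suffices to find an element of $\mathrm{Stab}_{E(n)}(t)$ lying outside $SE(n)$, and then apply the preceding lemma with $G=E(n)$, $G'=SE(n)$ and $H=\mathrm{Stab}_{E(n)}(t)$.

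\textbf{Step 1 (a reflection fixing the tuple).} The $n$ points $t_1,\dots,t_n$ span an affine subspace of dimension at most $n-1$. Hence they are contained in some affine hyperplane $\Pi\subset\mathbb{R}^n$. If the points are affinely dependent, $\Pi$ is not unique, and we simply pick any hyperplane containing their affine hull. Let $f$ be the orthogonal reflection across $\Pi$. Explicitly, if $\Pi=\{x : \langle u,x\rangle=c\}$ with $\norm{u}_2=1$, then
\[
f(x)=(I-2uu^\top)x+2cu .
\]
The linear part $I-2uu^\top$ has determinant $-1$, so $f\in E(n)\setminus SE(n)$. Moreover $f(t_i)=t_i$ for every $i$, so $f\in\mathrm{Stab}_{E(n)}(t)$.

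\textbf{Step 2 (decomposition $E(n)=SE(n)H$).} The map $\det:E(n)\to\{\pm1\}$, sending $x\mapsto Rx+b$ to $\det R$, is a surjective homomorphism with kernel $SE(n)$. Therefore $E(n)=SE(n)\cup SE(n)f$. Since $e,f\in H$, this gives $E(n)=SE(n)\{e,f\}\subseteq SE(n)H\subseteq E(n)$, so $E(n)=SE(n)H$. The lemma then yields $E(n)\,t=SE(n)\,t$ for every $t\in K_n$, and hence $K_n/SE(n)=K_n/E(n)$ as partitions of $K_n$.

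\textbf{Step 3 (invariant functions).} Let $f_{SE(n)}$ be $SE(n)$-invariant, and let $g\in E(n)$. The point $g\cdot t$ lies in $E(n)\,t=SE(n)\,t$, so $g\cdot t=g'\cdot t$ for some $g'\in SE(n)$. Consequently $f_{SE(n)}(g\cdot t)=f_{SE(n)}(g'\cdot t)=f_{SE(n)}(t)$. Equivalently, by Theorem~\ref{thm:quotUni}, $f_{SE(n)}$ factors through $K_n/SE(n)=K_n/E(n)$.

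The only point requiring care is Step 1 in degenerate configurations, for instance when points coincide or are collinear. Existence of a containing hyperplane still holds there, since the affine hull has dimension at most $n-1$ and can be extended to a hyperplane. Note that the count matters: the argument works because there are exactly $n$ points, at most. For $n+1$ points in general position no such reflection exists, and the statement indeed fails.
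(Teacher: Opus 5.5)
Your proof is correct and follows essentially the paper's route: you pick a reflection across a hyperplane containing the $n$ points, then use the index-$2$ decomposition $E(n)=SE(n)\{e,f\}\subseteq SE(n)\,\mathrm{Stab}_{E(n)}(t)$, and conclude with the lemma $Gx=G'Hx=G'x$, exactly generalizing the paper's $n=3$ argument. The paper leaves two points implicit that you make explicit, and both are right: degenerate configurations still lie in a hyperplane, and the claim fails for $n+1$ affinely independent points.
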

        \vspace{5pt}

\item \textbf{3D Position-Orientation Latent Space:} Let $p\in \mathcal{P}=\mathbb{R}^3\times S^2\equiv SE(3)/SO(2)$. Then by \textit{reduction to the isotropy} (Theorem~\ref{thm:redIso}), $SE(3)$-invariants on $\mathbb{R}^3\times \mathbb{R}^3\times SE(3)/SO(2)$ are in one-to-one correspondence with $SO(2)$-invariants on $\mathbb{R}^3\times \mathbb{R}^3$. Since $SO(2)$ as the stabilizer subgroup of $((1,0,0), (1,0,0))\in \mathbb{R}^3 \times S^2$ can be described as

         \[
        SO(2) \cong  \left\{\left(\begin{array}{ccc}
    1 & 0 & 0 \\ 
    0  & \multicolumn{2}{c}{\multirow{2}{*}{{\scalebox{1.5}{$R$}}}} \\
    0 & & 
  \end{array}\right)\ : \ R \in SO(2)\right\} \quad  \left(\text{it will fix } s_{(1)} \text{ and act on } (s_{(2)}, s_{(3)}) \right)
        \]

 Hence, the set of separating invariants of $\mathbb{R}^3\times \mathbb{R}^3$ with regards $SO(2)$ will be 
    \[
     \left\{ s_{(1)}, r_{(1)}\right\} \cup (\text{set of separating invariants of } \mathbb{R}^2\times \mathbb{R}^2 \text{ with regards to } SO(2)),
    \]
     where $s_{(1)}, r_{(1)}$ are the first coordinate of $s$ and $r$ respectively. By Theorem~\ref{thm:FMT-SO} we know what are the set of separating invariants of $\mathbb{R}^2\times \mathbb{R}^2$ with regards $SO(2)$. Meaning, that the set of separating invariants of $\mathbb{R}^3\times \mathbb{R}^3$ with regards $SO(2)$ will be

     \begin{equation*}
         \boxed{\begin{aligned}
             \Big\{ &s_{(1)}, r_{(1)}, \norm{(s_{(2)}, s_{(3)})}_2^2, \norm{(r_{(2)}, r_{(3)})}_2^2,\\
        &\norm{(s_{(2)}-r_{(2)}, s_{(3)}-r_{(3)})}_2^2, \det((s_{(2)}, s_{(3)}), (r_{(2)}, r_{(3)}))\Big\} =  \left\{f_{SO(2)}^{\ell}(s,r)\right\}_{\ell =1, ...,6}
         \end{aligned}}
     \end{equation*}

 Let $p=(t, \alpha) \in \mathbb{R}^3\times S^2$, as stated in Remark~\ref{remk:hom}, we can identify it with a coset 

\[  p\leftrightarrow \Bar{p}SO(2)=
     \left(\begin{array}{cccc}
    \multicolumn{3}{c}{\multirow{3}{*}{{\scalebox{2}{$A$}}}}  &   \vert \\ 
       & & &t\\
       & & & \vert \\
    0 & 0 &  0 & 1
  \end{array}\right) SO(2) \in SE(3)/SO(2),
\]

\[
\text{with } A= \begin{bmatrix}
    \vert & \vert & \vert\\
    \alpha & \beta & \alpha \times \beta\\
    \vert & \vert & \vert
\end{bmatrix}\in SO(3), \text{ s.t. } \beta\perp  \alpha \text{ and } \norm{\beta}_2=1.
\]

One can check that with this construction $\norm{\alpha \times \beta}_2=1$, and $\det(A)=1$, i.e., $A\in SO(3)$. Then we will construct the canonalization function $\rho: \mathbb{R}^3\times S^2 \to SE(3)$ satisfying the identity of Equation~\eqref{eq:fakemov}, as $\rho(p)= \widebar{p}\moveup{9}^{-1}$. 

Thus, by Theorem~\ref{thm:redIso}, the set of separating invariants of $\mathbb{R}^3 \times \mathbb{R}^3 \times (\mathbb{R}^3 \times S^2)$ with regards to $E(3)$ is 
\[
\left\{f_{SO(2)}^{\ell}\left(A^Ts-A^Tt,A^Tr-A^Tt\right)\right\}_{\ell =1, ...,6}\,.
\]

\item \textbf{Group Latent Space:} Let $p \in \mathcal{P}= SE(3)$. By Corollary~\ref{col:group_lat}, the set of separating invariants of $\mathbb{R}^3\times \mathbb{R}^3 \times SE(3)$ is
\[
\boxed{\left\{p^{-1}\cdot s, p^{-1}\cdot r\right\}}
\]
\end{enumerate}
       
\end{enumerate}

\paragraph{Spherical Input Space} Let $s,r \in \mathcal{X}=S^2\subset \mathbb{R}^3$. Then we will consider the following two isometry groups:

\begin{enumerate}[label=(\alph*)]
    \item \textbf{Orthogonal group:} Let $G=O(3)$. Then we have three homogeneous spaces as candidates for our latent conditioning pose-space

    \begin{enumerate}[label=(\roman*)]
        \item \textbf{Spherical Latent Space:} Let $p\in \mathcal{P}=S^2\equiv O(3)/O(2)$. Since $S^2\times S^2 \times S^2$ is an embedded sub-manifold  of $\mathbb{R}^3 \times \mathbb{R}^3 \times \mathbb{R}^3$, we will have that the set of $O(3)$ separating invariants of $S^2\times S^2 \times S^2$ will be the restrictions of those of  $\mathbb{R}^3 \times \mathbb{R}^3 \times \mathbb{R}^3$ given by Theorem~\ref{thm:FMT-O}, i.e.,
                \[\hspace*{-\leftmargin} \boxed{
         \left\{ \norm{s-r}^2_2, \norm{s-p}^2_2, \norm{r-p}^2_2\right\}} \quad (\text{the terms } \norm{s}_2^2=\norm{r}_2^2=\norm{p}_2^2=1 \text{ are constant})
        \]

    \item \textbf{Stiefel Manifold Latent Space:} Let $p \in \text{V}_{2,3}= \{F\in  \mathbb{R}^{3\times 2} \mid F^T F = I_2\in \mathbb{R}^{2\times 2}\}\equiv O(3)/O(1)$ \citep{lim_grassmannian_2021}. Then by \textit{reduction to the isotropy} (Theorem~\ref{thm:redIso}), $O(3)$-invariants on $S^2\times S^2\times O(3)/O(1)$ are in one-to-one correspondence with $O(1)$-invariants on $S^2\times S^2$. Since $O(1)$ as the stabilizer subgroup of $[(1,0,0)^T \mid (0,1,0)^T]\in \text{V}_{2,3}$ can be described as

        \[
        O(1) \cong  \left\{ \begin{pmatrix}
1 &  0 & 0\\
 0&  1 &0\\
 0&  0 &1\\
\end{pmatrix},   \begin{pmatrix}
1 &  0 & 0\\
 0&  1 &0\\
 0&  0 &-1\\
\end{pmatrix}\right\}.
        \]
        
   Hence, the set of separating invariants of $S^2\times S^2$ with regard to $O(1)$ is obtained by restricting the $O(1)$ invariants of $\mathbb{R}^3 \times \mathbb{R}^3$ to the sphere. A minimal separating set is:

\[ \boxed{\Big\{s_{(1)}, s_{(2)}, r_{(1)}, r_{(2)}, s_{(3)}r_{(3)}\Big\} =  \left\{f_{O(1)}^{\ell}(s,r)\right\}_{\ell =1, ...,5}}\]

\begin{remark}
On $S^2$, the constraint $\|s\|_2^2 = 1$ implies $(s_{(3)})^2 = 1 - (s_{(1)})^2 - (s_{(2)})^2$, so $(s_{(3)})^2$ is determined by $s_{(1)}$ and $s_{(2)}$ (similarly for $r$). Moreover, $(s_{(3)} - r_{(3)})^2 = (s_{(3)})^2 - 2s_{(3)}r_{(3)} + (r_{(3)})^2$, so the only independent information is $s_{(3)}r_{(3)}$. Thus a minimal separating set is $\{s_{(1)}, s_{(2)}, r_{(1)}, r_{(2)}, s_{(3)}r_{(3)}\}$. 
\end{remark}
\vspace{5pt}
     Let $p= [\alpha \mid \beta] \in V_{2,3}$, as stated in Remark~\ref{remk:hom}, we can identify it with a coset

\[  p\leftrightarrow \Bar{p}O(1)=\begin{bmatrix}
    \vert & \vert & \vert\\
    \alpha & \beta & \alpha \times \beta\\
    \vert & \vert & \vert
\end{bmatrix} O(1) \in O(3)/O(1),
\]
Then we will construct the canonalization function $\rho: V_{2,3} \to O(3)$ satisfying the identity of Equation~\eqref{eq:fakemov}, as $\rho(p)= \widebar{p}\moveup{9}^{-1}= \widebar{p}^T$. 

Thus, by Theorem~\ref{thm:redIso}, the set of separating invariants of $S^2 \times S^2 \times \text{V}_{2,3}$ with regards to $O(3)$ is 
\[
\left\{f_{O(1)}^{\ell}\left(\widebar{p}^T \cdot s,\widebar{p}^T \cdot r\right)\right\}_{\ell =1, ...,5}\,.
\]

           \item \textbf{Group Latent Space:} Let $p \in \mathcal{P}= O(3)$. By Corollary~\ref{col:group_lat}, the set of separating invariants of $\mathbb{R}^2\times \mathbb{R}^2 \times O(3)$ is
\[
\boxed{\left\{p^{T}\cdot s, p^{T}\cdot r\right\}}
\]

\end{enumerate}
    \item \textbf{Special Orthogonal group:} Let $G=SO(3)$. Then we have two homogeneous spaces as candidates for our latent conditioning pose-space

        \begin{enumerate}[label=(\roman*)]

        \item \textbf{Spherical Latent Space:} Let $p\in \mathcal{P}=S^2\equiv SO(3)/SO(2)$. The set of separating invariants (inhered from  $\mathbb{R}^3 \times \mathbb{R}^3 \times \mathbb{R}^3$) are the restriction of the invariants given by Theorem~\ref{thm:FMT-SO}, i.e.,
         \[ \boxed{
         \left\{ \norm{s-r}^2_2, \norm{s-p}^2_2, \norm{r-p}^2_2 , \det(s, r, p)\right\}}
        \]

        \item \textbf{Group Latent Space:} Let $p \in \mathcal{P}= SO(3)$. By Corollary~\ref{col:group_lat}, the set of separating invariants of $\mathbb{R}^2\times \mathbb{R}^2 \times SO(3)$ is
\[
\boxed{\left\{p^{T}\cdot s, p^{T}\cdot r\right\}}
\]
\end{enumerate}

\end{enumerate}

\section{Further Discussion}
\label{app:discussion}

Symmetry-preserving neural network architectures have proven highly effective across diverse application domains, yet the two dominant paradigms, namely group convolutions and steerable networks, each exhibit fundamental limitations.

Group convolutional networks \citep{cohen_group_2016} extend the translation equivariance of classical CNNs to richer symmetry groups but face fundamental constraints when the symmetry group contains continuous non-translational components. For example, achieving $\mathrm{SE}(n)$-equivariance requires discretizing $\mathrm{SO}(n)$ to a finite subgroup, an approximation that introduces errors and breaks exact equivariance guarantees \citep{weiler_3d_2018}.

Steerable networks \citep{cohen2019general} take a complementary approach rooted in representation theory: they decompose feature spaces into irreducible representations and parameterize equivariant linear maps via Schur's lemma. However, this formalism introduces two critical bottlenecks. First, preserving equivariance throughout the network requires specialized activation functions that respect the irreducible representation structure, constraining architectural flexibility and potentially limiting expressivity \citep{bekkers_fast_2023}. Second, the explicit computation of Clebsch--Gordan coefficients, necessary to decompose tensor products into direct sums of irreducible representations, becomes computationally intractable for general groups\footnote{Clebsch--Gordan coefficients can be computed for other Lie groups beyond $\mathrm{SO}(3)$ \citep{alex2011numerical}, but such extensions remain an active area of research in tensor networks.} beyond well-studied cases such as $\mathrm{SO}(3)$ \citep{villar_scalars_2021}.

Invariant-theoretic approaches offer a principled alternative that circumvents both difficulties. Methods such as those in \cite{villar_scalars_2021}, \cite{bekkers_fast_2023}, and \cite{blum-smith_learning_2024} parameterize equivariant maps directly through a set of separating invariants rather than through irreducible decompositions. Crucially, this formulation achieves provable universality when composed with any universal function approximator (see Proposition~\ref{prop:max_express}), imposing virtually no architectural constraints and enabling practitioners to leverage state-of-the-art backbones \citep{blum-smith_machine_2023}. A caveat of such methods is that the number of invariants required for maximum expressivity grows with the number of input coordinates and the group's dimensionality, increasing computational cost. Nevertheless, sketching approaches such as \cite{dym_low_2023} demonstrate that random projections of set invariants can mitigate this expense.

Our proposed \textit{Generalized Reduction to the Isotropy} (Theorem~\ref{thm:redIso}) extends the applicability of these invariant-theoretic methods to heterogeneous product spaces, enabling practitioners to leverage classical tools on the reduced space and lift the resulting invariants via canonicalization. We emphasize that with these separating invariants, one can construct \emph{any} invariant function on heterogeneous products in a provably universal manner. However, extending this framework to fully equivariant architectures, particularly achieving universal approximation with equivariant aggregation mechanisms, remains an open problem; our method provides essential building blocks but does not yet offer a complete solution for the equivariant case.

It is worth noting that equivariant architectures themselves are a popular choice for learning invariant maps, since invariant functions form a subset of equivariant ones. In group convolutional networks, invariance is achieved via group pooling over the output, while in steerable networks, one simply restricts the output to type-0 (scalar) steerable features.

A natural application domain beyond Equivariant Neural Fields is \emph{equivariant reinforcement learning}. When a Markov Decision Process (MDP) admits a symmetry group $G$, encoding the symmetry as an inductive bias in value functions can substantially improve sample efficiency and generalization~\citep{wang2022mathrm}. In a $G$-invariant MDP, the optimal action-value function satisfies
\[
Q^*(s,a) = Q^*(g\cdot s,\, g\cdot a), \quad \forall g \in G,
\]
motivating the construction of invariant Deep-Q networks. Our reduction framework is particularly well-suited to this setting for two reasons. First, practical RL state spaces are rarely homogeneous: they typically comprise spatial coordinates, orientations, internal configuration variables, and high-dimensional observations (e.g., images), each transforming differently under $G$ \citep{kober2013reinforcement}. Second, the $Q$-function is defined on the product space $\mathcal{S} \times \mathcal{A}$, which is itself a heterogeneous product, as states and actions often belong to qualitatively different representation spaces of $G$. Our framework provides a principled approach for constructing invariant value functions on such compound spaces. A systematic investigation of how different choices of invariant representations affect learning dynamics and generalization in this context constitutes an exciting direction for future work.